\newif\ifshowcomments
\newif\ifciteformat
\newcommand {\michael}[1]{{\color{red}\sf{[Michael: #1]}}}
\newcommand {\addressed}[1]{{\color{blue}\sf{[Addressed: #1]}}}
\newcommand {\ryan}[1]{{\color{purple}\sf{[Ryan: #1]}}}
\newcommand {\lsh}[1]{{\color{green}\sf{[Liam: #1]}}}
\newcommand {\remove}[1]{{\color{yellow}{#1}}}
\newcommand{\yaoqing}[1]{\textcolor{orange}{\sf[Yaoqing:\ #1]}}
\newcommand {\michael}[1]{}
\newcommand {\addressed}[1]{}
\newcommand {\ryan}[1]{}
\newcommand {\lsh}[1]{}
\newcommand {\remove}[1]{}
\newcommand{\yaoqing}[1]{}
\newcommand\E{\mathbb{E}}
\newcommand\R{\mathbb{R}}
\renewcommand\P{\mathbb{P}}
\newcommand{\xb}{\boldsymbol{x}}
\newcommand{\zb}{\boldsymbol{z}}
\newcommand{\hb}{\boldsymbol{h}}
\newcommand{\EI}{\mathrm{EIR}}
\newcommand{\DER}{\mathrm{DER}}
\newcommand{\defn}[1]{\textbf{\emph{#1}}}
\newcommand{\mv}{\mathrm{MV}}
\newcommand\dis{D}
\newcommand\testerr{L}
\newcommand{\Qc}{\rho}
\newcommand{\Dc}{\mathcal{D}}
\DeclareMathOperator*{\argmax}{arg\,max}
\newtheorem{theorem}{Theorem}
\newtheorem{proposition}{Proposition}
\newtheorem{lemma}{Lemma}
\theoremstyle{definition}
\newtheorem{assumption}{Assumption}
\newtheorem{definition}{Definition}
\newtheorem{example}{Example}
\newtheorem{remark}{Remark}
\title{
When are ensembles really effective?}
  \author{%
          Ryan Theisen \\
  Department of Statistics\\
  University of California, Berkeley\\
  \and
  Hyunsuk Kim \\
  Department of Statistics\\
  University of California, Berkeley\\
  \and
  Yaoqing Yang \\
  Department of Computer Science \\
  Dartmouth College \\
  \and
  Liam Hodgkinson \\
  School of Mathematics and Statistics \\
  University of Melbourne, Australia \\
  \and
   Michael W. Mahoney\\
  International Computer Science Institute \\ 
  Lawrence Berkeley National Laboratory \\ 
  and Department of Statistics\\
  University of California, Berkeley\\
 }
\begin{document}

\maketitle

\begin{abstract}
Ensembling has a long history in statistical data analysis, with many impactful applications. 
However, in many modern machine learning settings, the benefits of ensembling are less ubiquitous and less obvious. 
We study, both theoretically and empirically, the fundamental question of when ensembling yields significant performance improvements in classification tasks. 
Theoretically, we prove new results relating the \emph{ensemble improvement rate} (a measure of how much ensembling decreases the error rate versus a single model, on a relative scale) to the \emph{disagreement-error ratio}. 
We show that ensembling improves performance significantly whenever the disagreement rate is large relative to the average error rate; and that, conversely, one classifier is often enough whenever the disagreement rate is low relative to the average error rate. 
On the way to proving these results, we derive, under a mild condition called \emph{competence}, improved upper and lower bounds on the average test error rate of the majority vote classifier.
To complement this theory, we study ensembling empirically in a variety of settings, verifying the predictions made by our theory, and identifying practical scenarios where ensembling does and does not result in large performance improvements. 
Perhaps most notably, we demonstrate a distinct difference in behavior between interpolating models (popular in current practice) and non-interpolating models (such as tree-based methods, where ensembling is popular), demonstrating that ensembling helps considerably more in the latter case than in the former.
\end{abstract}

\section{Introduction}
%
The fundamental ideas underlying ensemble methods can be traced back at least two centuries, with Condorcet's Jury Theorem among its earliest developments \cite{condorcet1785essay}. 
This result asserts that if each juror on a jury makes a correct decision independently and with the same probability $p > 1/2$, then the majority decision of the jury is more likely to be correct with each additional juror. 
The general principle of aggregating knowledge across imperfectly correlated sources is intuitive, and it has motivated many ensemble methods used in modern statistics and machine learning practice. 
Among these, tree-based methods like random forests \cite{breiman2001random} and XGBoost \cite{chen2016xgboost} are some of the most effective and widely-used. 


With the growing popularity of deep learning, a number of approaches have been proposed for ensembling neural networks. 
Perhaps the simplest of them are so-called deep ensembles, which are ensembles of neural networks trained from independent initializations \cite{abe2022deep, abebest, fort2019deep}. 
In some cases, it has been claimed that such deep ensembles provide significant improvement in performance \cite{fort2019deep, ovadia2019can, ashukha2020pitfalls}. 
Such ensembles have also been used to obtain uncertainty estimates for prediction \cite{lakshminarayanan2017simple} and to provide more robust predictions under distributional shift. 
However, the benefits of deep ensembling are not universally accepted. 
Indeed, other works have found that ensembling is less necessary for larger models, and that in some cases a single larger model can perform as well as an ensemble of smaller models \cite{hinton2015distilling,bucilua2006model,geiger2020scaling, abe2022deep}. 
Similarly mixed results, where empirical performance does not conform with intuitions and popular theoretical expectations, have been reported in the Bayesian approach to deep learning~\cite{izmailov2021dangers}. Furthermore, an often-cited practical issue with ensembling, especially of large neural networks, is the constraint of storing and performing inference with many distinct models. 

In light of the increase in computational cost, it is of great value to understand exactly when we might expect ensembling to improve performance non-trivially. 
In particular, consider the following practical scenario: a practitioner has trained a single (perhaps large and expensive) model, and would like to know whether they can expect significant gains in performance from training additional models and ensembling them. 
This question lacks a sufficient answer, both from the theoretical and empirical perspectives, and hence motivates the main question of the present work: 

\begin{center}
\begin{tcolorbox}[hbox]
When are ensembles \emph{\textbf{really}} effective?
\end{tcolorbox}
\end{center}

The present work addresses this question, both theoretically and empirically, under very general conditions.
We focus our study on the most popular ensemble classifier---the \emph{majority vote classifier} (Definition \ref{def:mv-classifier}), which we denote by $h_\mv$---although our framework also covers variants such weighted majority vote methods. 

\paragraph{Theoretical results.} 
Our main theoretical contributions, contained in Section \ref{sec:theory}, are as follows. 
First, we formally define the \emph{ensemble improvement rate} ($\EI$, Definition \ref{def:eir}), which measures the decrease in error rate from ensembling, on a relative scale. 
We then introduce a new condition called \emph{competence} (Assumption \ref{assumption-1}) that rules out pathological cases, allowing us to prove stronger bounds on the ensemble improvement rate. 
Specifically, 1) we prove (in Theorem \ref{thm:Always}) that competent ensembles can never hurt performance, and 2) we prove (in Theorem \ref{thm:eir-der-linear}) that the $\EI$ can be upper and lower bounded by linear functions of the \emph{disagreement-error ratio} ($\DER$, Definition \ref{def:der}). 
\defn{Our theoretical results predict that ensemble improvement will be high whenever disagreement is large relative to the average error rate (i.e., $\DER > 1$).} 
Moreover, we show (in Appendix \ref{app:mv_bounds}) that as Corollaries of our theoretical results, we obtain new bounds on the error rate of the majority vote classifier that significantly improve on previous results, provided the competence assumption is satisfied.

\paragraph{Empirical results.} 
In light of our new theoretical understanding of ensembling, we perform a detailed empirical analysis of ensembling in practice. 
In Section \ref{sec:evaluating-theory}, we evaluate the assumptions and predictions made by the theory presented in Section \ref{sec:theory}. 
In particular, we verify on a variety of tasks that the competence condition holds, we verify empirically the linear relationship between the $\EI$ and the $\DER$, as predicted by our bounds, and we suggest directions through which our theoretical results might be improved. 
In Section \ref{sec:interpolation}, we provide significant evidence for distinct behavior arising for ensembles in and out of the ``interpolating regime,'' i.e., when each of the constituent classifiers in an ensemble has sufficient capacity to achieve zero training error. 
    \defn{We show 
    1) that interpolating ensembles exhibit consistently lower ensemble improvement rates, and 
    2) that this corresponds to ensembles transitioning (sometimes sharply) from the regime $\DER > 1$ to $\DER < 1$.} 
    Finally, we also show that tree-based ensembles represent a unique exception to this phenomenon, making them particularly well-suited to ensembling.

In addition to the results presented in the main text, we provide supplemental theoretical results (including all proofs) in Appendix \ref{app:proofs}, as well as supplemental empirical results in Appendix~\ref{app:additional-empirical}.
\section{Background and preliminaries}
\label{sec:background}

In this section, we present some background as well as preliminary results of independent interest that set the context for our main theoretical and empirical results.

\subsection{Setup}
\label{sec:setup}

In this work, we focus on the $K$-class classification setting, wherein the data $(X,Y) \in \mathcal{X}\times \mathcal{Y} \sim \Dc$ consist of features $\xb \in \mathcal{X}$ and labels $y\in \mathcal{Y} = \{1,\dots, K\}$. 
Classifiers are then functions $h:\mathcal{X} \to \mathcal{Y}$ that belong to some set $\mathcal{H}$. 
To measure the performance of a single classifier $h$ on the data distribution $\Dc$, we use the usual error~rate:
\begin{align*}
\testerr_\Dc(h) = \E_{X,Y\sim \Dc}[\mathbb{1}(h(X)\neq Y)].
\end{align*}
For notational convenience, we drop the explicit dependence on $\Dc$ whenever it is apparent from~context.

A central object in our study is a distribution $\Qc$ over classifiers.
Depending on the context, this distribution could represent a variety of different things. 
For example, $\rho$ could be: 
\begin{enumerate}[i)]
\item A discrete distribution on a finite set of classifiers $\{h_1,\dots,h_M\}$ with weights $\rho_1,\dots,\rho_M$, e.g., representing normalized weights in a weighted ensembling scheme; 
\item A distribution over parameters $\theta$ of a parametric family of models, $h_\theta$, determined, e.g., by a stochastic optimization algorithm with random initialization; 
\item A Bayesian posterior distribution.
\end{enumerate}

The distribution $\rho$ induces two error rates of interest. 
The first is the \defn{average error rate of any single classifier} under $\rho$, defined to be $\E_{h\sim\rho}[\testerr(h)]$. 
The second is the \defn{error rate of the majority vote classifier}, $h_\mv$, which is defined for a distribution $\rho$ as~follows.

\begin{definition}[Majority vote classifier]
\label{def:mv-classifier}
Given $\Qc$, the \defn{majority vote classifier} is the classifier which, for an input $\xb$, predicts the most probable class for this input among classifiers drawn from $\Qc$,
\begin{align*}
h_{\mv}(\xb) = \argmax_{j}\; \E_{h\sim \Qc}[\mathbb{1}(h(\xb)=j)].
\end{align*}
\end{definition}
In the Bayesian context, $\rho = \rho(h\mid X_{\text{train}}, y_{\text{train}})$ is a posterior distribution over classifiers.
In this case, the majority vote classifier is often called the Bayes classifier, and the error rate $L(h_\mv)$ is called the Bayes error rate. 
In such contexts, the average error rate is often referred to as the Gibbs error rate associated with $\rho$ and~$\Dc$. 

Finally, we will present results in terms of the \emph{disagreement rate} between classifiers drawn from a distribution $\rho$, defined as follows.

\begin{definition}[Disagreement]
    The \defn{disagreement rate} between two classifiers $h,h'$ is given by $\dis_{\Dc}(h,h')=\E_{X \sim \Dc}[\mathbb{1}(h(X)\neq h'(X))]$. 
    The \defn{expected disagreement rate} is $\E_{h,h'\sim \rho}[\dis_{\Dc}(h,h')]$, where $h,h'\sim \rho$ are drawn independently.
    \label{def:disagreement}
\end{definition}

\subsection{Prior work}
\label{sec:existing-theory}

\paragraph{Ensembling theory.}
Perhaps the simplest general relation between the majority vote error rate and the average error rate guarantees only that the majority vote classifier is no worse than twice the average error rate \cite{LAVIOLETTE201715, second-order-mv-bounds2020}. 
To see this, let $W_\rho \equiv W_\rho(X,Y) = \mathbb{E}_{h\sim \rho}[\mathbb{1}(h(X) \neq Y)]$ denote the proportion of erroneous classifiers in the ensemble for a randomly sampled input-output pair $(X,Y) \sim \mathcal{D}$, and note that $\E[W_\rho] = \E[L(h)]$. 
Then, by a ``first-order'' application of Markov's inequality, we have that
\begin{align}
\label{eq:trivial-FO}
0 \leq \testerr(h_\mv) \leq \mathbb{P}(W_\rho \geq 1/2) \leq 2\, \mathbb{E}[W_\rho] = 2\,\E_{h\sim \Qc}[\testerr(h)].
\end{align}
This bound is almost always uninformative in practice. Indeed, it may seem surprising that an ensemble classifier could perform \emph{worse} than the average of its constituent classifiers, much less a factor of two worse. 
Nonetheless, the first-order upper bound is, in fact, tight: there exist distributions $\rho$ (over classifiers) and $\Dc$ (over data) such that the majority vote classifier is twice as erroneous as any one classifier, on average. As one might expect, however, this tends to happen only in pathological cases; we give examples of such ensembles in Appendix~\ref{app:pathological}.

To circumvent the shortcomings of the simple first-order bound, more recent approaches have developed bounds incorporating ``second-order'' information from the distribution $\rho$ \cite{second-order-mv-bounds2020}. 
One successful example of this is given by a class of results known as C-bounds \cite{JMLR:v16:germain15a,LAVIOLETTE201715}.
The most general form of these bounds states, provided $\E[M_\rho(X,Y)] > 0$, that $\testerr(h_\mv) \leq 1 -\E[M_\rho(X,Y)]^2/\E[M_\rho^2(X,Y)]$, where $M_\rho(X,Y) = \E_{h\sim \rho}[ \mathbb{1}(h(X) = Y)] - \max_{j\neq Y}\E_{h\sim \rho}[\mathbb{1}(h(X) = j)]$
is called the \emph{margin}. 
In the binary classification case, the condition $\E[M_\rho(X,Y)] > 0$ is equivalent to the assumption $\E_{h\sim \rho}[\testerr(h)] < 1/2$.
Hence, it can be viewed as a requirement that individual classifiers are ``weak learners.'' 
The same condition is used to derive a very similar bound for random forests in \cite{breiman2001random}, which is then further upper bounded to obtain a more intuitive (though weaker) bound in terms of the ``c/s2''~ratio. Relatedly, \cite{second-order-mv-bounds2020} obtains a bound on the error rate of the majority-vote classifier, in the special case of binary classification, directly in terms of the disagreement rate, taking the form $4\E[L(h)] - 2\E[\dis(h,h')]$. We note that our theory improves this bound by factor of $2$ (see Appendix \ref{app:mv_bounds}). Other results obtain similar expressions, but in terms of different loss functions, e.g., cross-entropy loss~\cite{abe2022deep, ortega22a}.

\paragraph{Other related studies.} In addition to theoretical results, there have been a number of recent empirical studies investigating the use of ensembling. Perhaps the most closely related is \cite{abebest}, which shows, perhaps surprisingly, that ensembles do not benefit significantly from encouraging diversity during training. In contrast to the present work, \cite{abebest} focuses on the cross entropy loss for classification (which facilitates somewhat simpler theoretical analysis), whereas we focus on the more intuitive and commonly used classification error rate. Moreover, while \cite{abebest} study the ensemble improvement \emph{gap} (i.e., the difference between the average loss of a single classifier and the ensemble loss), we focus on the gap in error rates on a relative scale. As we show, this provides \emph{much} finer insights into ensembles improvement. To complement this, \cite{fort2019deep} study ensembling from a loss landscape perspective, evaluating how different approaches to ensembling, such as deep ensembles, Bayesian ensembles, and local methods like Gaussian subspace sampling compare in function and weight space diversity. Other recent work has studied the use of ensembling to provide uncertainty estimates for prediction \cite{lakshminarayanan2017simple}, and to improve robustness to out-of-distribution data \cite{ovadia2019can}, although the ubiquity of these findings has recently been questioned in \cite{abe2022deep}.

\vspace{-1mm}
\section{Ensemble improvement, competence, and the disagreement-error ratio}
\label{sec:theory}
\vspace{-1mm}
In this section, our goal is to characterize theoretically the rate at which ensembling improves performance. To do this, we first need to formalize a metric to quantify the benefit from ensembling. 
One natural way of measuring this improvement would be to compute the gap $\E_{h\sim \Qc}[\testerr(h)] - \testerr(h_\mv)$. 
A similar gap was the focus of \cite{abebest}, although in terms of the cross-entropy loss, rather than the classification error rate. 
However, the unnormalized gap can be misleading---in particular, it
will tend to be small whenever the average error rate itself is small, thus making it impractical to compare, e.g., across tasks of varying difficulty. 
Instead, we work with a normalized version of the average-minus-ensemble error rate gap, where the effect of the normalization is to measure this error in a relative scale.
We call this the ensemble improvement rate. 
\begin{definition}[Ensemble improvement rate]
\label{def:eir}
Given distributions $\rho$ over classifiers and $\Dc$ over data, provided that $\E_{h\sim \Qc}[\testerr(h)]\neq 0$, the \defn{ensemble improvement rate (EIR)} is defined as
\vspace{-1mm}
\begin{align*}
\EI = \frac{\E_{h\sim \Qc}[\testerr(h)] - \testerr(h_\mv)}{\E_{h\sim \Qc}[\testerr(h)]}.
\end{align*}
\end{definition}
In contrast to the unnormalized gap, the ensemble improvement rate can be large even for very easy tasks with a small average error rate. Recall the simple first-order bound on the majority-vote classifier: $L(h_\mv) \leq 2\E[L(h)]$. Rearranging, we deduce that $\EI \geq -1$. Unfortunately, in the absence of additional information, this first-order bound is in fact tight: one can construct ensembles for this $L(h_\mv) = 2\E[L(h)]$ (see Appendix \ref{app:pathological}). However, this bound is inconsistent with how ensembles generally behave and practice, and indeed it tells us nothing about when ensembling can improve performance. In the subsequent sections, we derive improved bounds on the $\EI$ that do. 

\subsection{Competent ensembles never hurt}
Surprisingly, to our knowledge, there is no known characterization of the majority-vote error rate that \emph{guarantees} it can be no worse than the error rate of any individual classifier, on average. Indeed, it turns out this is the result of strange behavior that can arise for particularly pathological ensembles rarely encountered in practice (see Appendix \ref{app:pathological} for a more detailed discussion of this). To eliminate these cases, we introduce a mild condition that we call \emph{competence}.  
\begin{assumption}[Competence]
\label{assumption-1}
Let $W_\rho \equiv W_\rho(X,Y) = \mathbb{E}_{h\sim \rho}[\mathbb{1}(h(X) \neq Y)]$. 
The ensemble $\rho$ is \defn{competent} if for every $0 \leq t \leq 1/2$, 
\[
\mathbb{P}(W_{\rho} \in [t,1/2)) \geq \mathbb{P}(W_{\rho} \in [1/2,1-t]).
\]
\end{assumption}

The competence assumption guarantees that the ensemble is not pathologically bad, and in particular it eliminates the scenarios under which the naive first-order bound (\ref{eq:trivial-FO}) is tight. As we will show in Section \ref{sec:evaluating-theory}, the competence condition is quite mild, and it holds broadly in practice. Our first result uses competence to improve non-trivially the naive first-order bound.
\begin{theorem}
\label{thm:Always}
Competent ensembles never hurt performance, i.e., $\EI \geq 0$.
\end{theorem}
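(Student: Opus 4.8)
The plan is to show that competence forces the pointwise inequality $\testerr(h_\mv) \le \E_{h\sim\Qc}[\testerr(h)]$, which immediately gives $\EI \ge 0$ because the denominator $\E_{h\sim\Qc}[\testerr(h)]$ in Definition~\ref{def:eir} is assumed nonzero (hence positive). Writing $W \equiv W_\rho(X,Y)$ and recalling that $\E[W] = \E_{h\sim\Qc}[\testerr(h)]$, it therefore suffices to prove $\testerr(h_\mv) \le \E[W]$.

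First I would reduce the majority-vote error to a tail event of $W$. If $W(X,Y) < 1/2$ then strictly more than half the mass of $\Qc$ is on classifiers predicting the true label $Y$ at $X$, so $Y$ is the unique maximizer in Definition~\ref{def:mv-classifier} and $h_\mv(X) = Y$. Contrapositively $\{h_\mv(X)\neq Y\}\subseteq\{W\ge 1/2\}$, so $\testerr(h_\mv) = \P(h_\mv(X)\neq Y)\le\P(W\ge 1/2)$. This is exactly the containment that underlies the trivial first-order bound \eqref{eq:trivial-FO}; the point is that competence lets us replace the ensuing factor-of-two Markov step by the sharp estimate $\P(W\ge 1/2)\le\E[W]$, which is all that remains to prove.

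To get that estimate, I would turn the competence assumption---a family of inequalities between increments of the law of $W$---into a moment inequality by integrating over the threshold. By the layer-cake identity, $\E[W\,\mathbb{1}(W<1/2)] = \int_0^{1/2}\P\big(W\in[t,1/2)\big)\,dt$ (the half-open versus open distinction is immaterial, since $\P(W=t)>0$ for at most countably many $t$). Applying Assumption~\ref{assumption-1} at each $t\in[0,1/2]$ gives $\E[W\,\mathbb{1}(W<1/2)] \ge \int_0^{1/2}\P\big(W\in[1/2,1-t]\big)\,dt$, and the substitution $u = 1-t$ followed by Fubini identifies the right-hand side with $\int_{1/2}^{1}\P\big(W\in[1/2,u]\big)\,du = \E\big[(1-W)\,\mathbb{1}(W\ge 1/2)\big]$ (using $W\le 1$). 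Adding $\E[W\,\mathbb{1}(W\ge 1/2)]$ to both sides yields $\E[W]\ge\E[\mathbb{1}(W\ge 1/2)] = \P(W\ge 1/2)$, and combining with the previous paragraph gives $\testerr(h_\mv)\le\E[W]=\E_{h\sim\Qc}[\testerr(h)]$, hence $\EI\ge 0$.

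The only nontrivial step is the last one: recognizing that integrating the competence inequalities against $dt$ produces precisely the two truncated first moments of $W$ on either side of $1/2$, so that the competence hypothesis is exactly ``the right amount'' to cancel the $\E[W\,\mathbb{1}(W\ge 1/2)]$ term and reach $\P(W\ge 1/2)$. The bookkeeping with half-open intervals and the atom $\{W=1/2\}$ needs a moment's care but creates no real difficulty, since the layer-cake representation is insensitive to it. The reduction in the first step is entirely standard.
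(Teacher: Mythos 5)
Your proof is correct and follows essentially the same route as the paper: reduce the majority-vote error to $\P(W_\rho \ge 1/2)$, then integrate the competence inequalities over the threshold (the layer-cake identity) to obtain $\P(W_\rho \ge 1/2) \le \E[W_\rho]$. The paper packages this integration step as a stochastic-dominance lemma for general increasing functions $h$ with $h(0)=0$ (reused later with $h(x)=2x^2$), whereas you carry it out directly for $h(x)=x$, but the argument is the same.
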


Translated into a bound on the majority vote classifier, Theorem \ref{thm:Always} guarantees that $L(h_\mv)\leq \E[L(h)]$, improving on the naive first-order bound (\ref{eq:trivial-FO}) by a factor of two. To the best of our knowledge, the competence condition is the first of its kind, in that it guarantees what is widely observed in practice, i.e., that ensembling cannot \emph{hurt} performance. 
However, it is insufficient to answer the question of \emph{how much} ensembling can improve performance. To address this question, we turn to a "second-order" analysis involving the disagreement~rate. 

\subsection{Quantifying ensemble improvement with the disagreement-error ratio}
\label{sec:eir-der}

Our central result in this section will be to relate the $\EI$ to the ratio of the disagreement to average error rate, which we define formally below. 

\begin{definition}[Disagreement-error ratio]
\label{def:der}
    Given the distributions $\rho$ over classifiers and $\Dc$ over data, provided that $\E_{h\sim \Qc}[\testerr(h)]\neq 0$, the \defn{disagreement-error ratio (DER)} is defined as 
    \begin{align*}
        \DER = \frac{\E_{h,h'\sim \Qc}[\dis(h,h')]}{\E_{h\sim \Qc}[\testerr(h)]}.
    \end{align*}
\end{definition}

Our next result relates the $\EI$ to a linear function of the $\DER$.
\begin{theorem}
\label{thm:eir-der-linear}
For any competent ensemble $\rho$ of $K$-class classifiers, provided $\E_{h\sim \Qc}[\testerr(h)] \neq 0$, the ensemble improvement rate satisfies
\begin{align*}
 \DER \geq \EI \geq \frac{2(K-1)}{K}\DER - \frac{3K - 4}{K}.
\end{align*}
\end{theorem}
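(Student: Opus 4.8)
The plan is to translate the two $\EI$ bounds into equivalent bounds on $L(h_\mv)$, argue pointwise in $(X,Y)$ wherever possible, and use the competence assumption at only one decisive step. Throughout, write $W \equiv W_\rho(X,Y) = \E_{h\sim\rho}[\mathbb{1}(h(X)\neq Y)]$, $p_j \equiv p_j(X) = \E_{h\sim\rho}[\mathbb{1}(h(X)=j)]$, $e := \E[W] = \E_{h\sim\rho}[L(h)]$, and $V := 1 - \sum_{j=1}^K p_j^2$, so that $d := \E[V] = \E_{h,h'\sim\rho}[\dis(h,h')]$ is the expected disagreement rate; note the identities $p_Y = 1-W$ and $\sum_{j\neq Y}p_j = W$. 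Since $\EI = 1 - L(h_\mv)/e$ and $\DER = d/e$, the claimed upper bound $\EI\le\DER$ is equivalent to $L(h_\mv)\ge e-d$, and the claimed lower bound is equivalent to $L(h_\mv)\le\tfrac{2(K-1)}{K}(2e-d)$.

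For the upper bound, I would establish the pointwise inequality $\mathbb{1}(h_\mv(X)\neq Y)\ge\sum_{j\neq Y}p_j(p_j-p_Y)$. When $h_\mv(X)=Y$ the right-hand side is $\le 0$, since then $p_j\le p_Y$ for every $j\neq Y$; when $h_\mv(X)\neq Y$ the right-hand side is at most $\sum_{j\neq Y}p_j^2\le\big(\sum_{j\neq Y}p_j\big)^2 = W^2\le 1$. A direct expansion gives $W - V = \sum_{j\neq Y}p_j(p_j-p_Y)$, so taking expectations yields $L(h_\mv)\ge\E[W-V]=e-d$. Notably, this part uses no assumption on $\rho$ at all; competence is needed only for the lower bound.

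For the lower bound, I would proceed in three steps. (i) $L(h_\mv)\le\P(W\ge 1/2)$, because $W<1/2$ forces $p_Y>1/2>\max_{j\neq Y}p_j$, hence $h_\mv(X)=Y$. (ii) Cauchy--Schwarz over the $K-1$ classes different from $Y$ gives $\sum_{j\neq Y}p_j^2\ge W^2/(K-1)$, hence the pointwise bound $V\le 2W-\tfrac{K}{K-1}W^2$; taking expectations, $2e-d=\E[2W-V]\ge\tfrac{K}{K-1}\E[W^2]$, i.e.\ $\E[W^2]\le\tfrac{K-1}{K}(2e-d)$. (iii) The decisive step: competence implies $\P(W\ge 1/2)\le 2\,\E[W^2]$. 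Chaining (i)--(iii) gives $L(h_\mv)\le\tfrac{2(K-1)}{K}(2e-d)$, which rearranges to $\EI\ge\tfrac{2(K-1)}{K}\DER-\tfrac{3K-4}{K}$, as claimed.

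The main obstacle is step (iii), which improves the crude Markov estimate $\P(W\ge 1/2)\le 4\,\E[W^2]$ by the factor of two that competence supplies. I would write $\E[W^2]=2\int_0^1 t\,G(t)\,dt$ with $G(t):=\P(W\ge t)$ (layer-cake, after the substitution $s=t^2$), and symmetrize about $1/2$ by applying $t\mapsto 1-t$ to the part of the integral over $[1/2,1]$, obtaining $\E[W^2]=2\int_0^{1/2}\big(t\,G(t)+(1-t)\,G(1-t)\big)\,dt$. Competence, applied to the events $\{W\in[t,1/2)\}$ and $\{W\in[1/2,1-t]\}$ and combined with $\P(W>1-t)\le G(1-t)$, yields $G(t)+G(1-t)\ge 2\,G(1/2)=2\,\P(W\ge 1/2)$ for all $t\in[0,1/2]$; substituting $G(t)\ge 2\,\P(W\ge 1/2)-G(1-t)$ into the integrand and using $1-2t\ge 0$ and $G\ge 0$ shows $t\,G(t)+(1-t)\,G(1-t)\ge 2\,\P(W\ge 1/2)\,t$, and integrating over $[0,1/2]$ produces exactly $\E[W^2]\ge\tfrac12\,\P(W\ge 1/2)$. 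As a consistency check, the same symmetrization applied to $\E[W]=\int_0^1 G(t)\,dt$ recovers Theorem \ref{thm:Always}. The only remaining bookkeeping concerns a possible atom of $W$ at $1/2$ and the $\argmax$ tie-breaking convention, neither of which affects any inequality above.
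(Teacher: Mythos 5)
Your proposal is correct, and its skeleton is the same as the paper's: for the lower bound you chain exactly the inequalities $L(h_\mv)\le\P(W_\rho\ge 1/2)\le 2\,\E[W_\rho^2]\le\tfrac{2(K-1)}{K}\bigl(2\,\E_h[L(h)]-\E_{h,h'}[\dis(h,h')]\bigr)$ used in \eqref{eq:app_pf_lb}, and for the upper bound both arguments reduce to $\E_h[L(h)]-L(h_\mv)\le\E_{h,h'}[\dis(h,h')]$, with competence used only for the lower bound in both treatments. The differences lie in how the sub-steps are proved. For the decisive competence step you obtain $\P(W_\rho\ge 1/2)\le 2\,\E[W_\rho^2]$ by a layer-cake symmetrization of $\E[W_\rho^2]$ about $1/2$ together with the consequence $G(t)+G(1-t)\ge 2\,G(1/2)$ of Assumption \ref{assumption-1}; the paper instead packages competence into Lemma \ref{lemma:stoch_dominant} (valid for any increasing $h$ with $h(0)=0$, itself proved by a layer-cake identity) and applies it with $h(x)=2x^2$, which is why the same lemma also yields Theorem \ref{thm:Always}---consistent with the sanity check you mention. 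For the deterministic step you apply Cauchy--Schwarz pointwise over the $K-1$ wrong classes to get $\E[W_\rho^2]\le\tfrac{K-1}{K}\bigl(2\,\E_h[L(h)]-\E_{h,h'}[\dis(h,h')]\bigr)$ directly, bypassing the tandem-loss identity (Lemma \ref{lemma:tandem}) and the constrained maximization of Lemma \ref{lemma:kclass_tandem_ub}; these give the identical bound, since the extremal configuration $\bar h_j=\tfrac{1-\bar h_Y}{K-1}$ in the paper's maximization is exactly the equality case of your Cauchy--Schwarz step. Finally, your upper bound comes from the pointwise inequality $\mathbb{1}(h_\mv(X)\neq Y)\ge\sum_{j\neq Y}p_j(p_j-p_Y)$, i.e.\ $L(h_\mv)\ge \E_h[L(h)]-\E_{h,h'}[\dis(h,h')]$, whereas the paper bounds the gap by $\E_{h,\Dc}[\mathbb{1}(h(X)\neq h_\mv(X))]=1-\E_\Dc[\max_k\bar h_k(X)]$ and then applies H\"older; the two are equivalent in content. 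In short: same decomposition and same key intermediate inequalities, with each sub-step established by a slightly more hands-on but equally valid argument; what the paper's route buys is reusable lemmas (stochastic dominance, tandem loss) that connect to prior work, while yours is marginally more self-contained.
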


Note that neither Theorem \ref{thm:Always} nor Theorem \ref{thm:eir-der-linear} is uniformly stronger. In particular, if $\DER < (3K-4)/(2K-2)$ then the lower bound provided in Theorem \ref{thm:Always} will be superior to the one in Theorem \ref{thm:eir-der-linear}. 


Theorem \ref{thm:eir-der-linear} predicts that the $\EI$ is fundamentally governed by a linear relationship with the $\DER$ --- a result that we will verify empirically in Section \ref{sec:evaluating-theory}. Importantly, we note that there are two distinct regimes in which the bounds in Theorem \ref{thm:eir-der-linear} provide non-trivial guarantees. 

\begin{itemize}
    \item[] 
    $\mathbf{DER}$ \textbf{small} ($\mathbf{< 1}$). 
    In this case, by the trivial bound (\ref{eq:trivial-FO}), $\EI \leq 1$, and thus the upper bound in Theorem \ref{thm:eir-der-linear} guarantees ensemble improvement cannot be too large whenever $\DER < 1$, that is, whenever disagreement is small relative to the average error rate.
    \item[] 
    $\mathbf{DER}$ \textbf{large} ($\mathbf{> 1}$).  
    In this case, the lower bound in Theorem \ref{thm:eir-der-linear} guarantees ensemble improvement whenever disagreement is sufficiently large relative to the average error rate, in particular when $\DER \geq (3K-4)/(2K-2) \geq 1$.
\end{itemize}

In our empirical evaluations, we will see that these two regimes ($\DER > 1$ and $\DER < 1$) strongly distinguish between situations in which ensemble improvement is high, and when the benefits of ensembling are significantly less pronounced. 

Moreover, Theorem \ref{thm:eir-der-linear} captures an important subtlety in the relationship between ensemble improvement and predictive diversity. In particular, while general intuition---and a significant body of prior literature, as discussed in Section \ref{sec:background}---suggests that higher disagreement leads to high ensemble improvement, this may \emph{not} be the case if the disagreement is nominally large, but small relative to the average error rate. 

\begin{remark}[Corollaries of Theorems \ref{thm:Always} and \ref{thm:eir-der-linear}]
Using some basic algebra, the upper and lower bounds presented in Theorems \ref{thm:Always} and \ref{thm:eir-der-linear} can easily be translated into upper and lower bounds on the error rate of the majority vote classifier itself. For the sake of space, we defer discussion of these Corollaries to Appendix \ref{app:mv_bounds}, although we note that the resulting bounds constitute significant improvements on existing bounds, which we verify both analytically (when possible) and empirically.
\end{remark}


\section{Evaluating the theory}
\label{sec:evaluating-theory}

In this section, we investigate the assumptions and predictions of the theory proposed in Section \ref{sec:theory}. In particular we will show 1) that the competence assumption holds broadly in practice, across a range of architectures, ensembling methods and datasets, and 2) that the $\EI$ exhibits a close linear relationship with the $\DER$, as predicted by Theorem \ref{thm:eir-der-linear}.

Before presenting our findings, we first briefly describe the experimental settings analyzed in the remainder of the paper. Our goal is to select a sufficiently broad range of tasks and methods so as to demonstrate the generality of our conclusions. 

\subsection{Setup for empirical evaluations}
\label{sec:experimental-setup}

In Table \ref{tab:models} we provide a brief description of our experimental setup; more extensive experimental details can be found in Appendix \ref{app:emp-details}.



\begin{table*}[h]

\caption{Datasets and ensembles used in empirical evaluations, where $C$ denotes the number of classes, and $M$ denotes the number of classifiers.}
\centering
\scalebox{0.9}{
{\small
\setlength\tabcolsep{6pt}
\begin{tabular}{|l|l|l|l|}
\hline 
\multicolumn{3}{|c|}{\textbf{\emph{Datasets}}} \\
\hline
\multicolumn{1}{|l|}{\textbf{Dataset}} & \multicolumn{1}{|l|}{\textbf{C}} & \multicolumn{1}{|l|}{\textbf{Reference}} \\
\hline
\hline
MNIST (5K subset)           & 10                  &  \cite{mnist}            \\
CIFAR-10         & 10                   & \cite{cifar10}         \\
IMDB             & 2                    & \cite{imdb-dataset}    \\
QSAR             & 2                   &  \cite{qsar-dataset}    \\
Thyroid          & 2                   & \cite{thyroid-dataset} \\
GLUE (7 tasks) & 2-3 & \cite{wang2018glue}\\
\hline
\end{tabular}
}
{\small
\setlength\tabcolsep{6pt}
\begin{tabular}{|l|l|l|l|}
\hline 
\multicolumn{4}{|c|}{\textbf{\emph{Ensembles}}} \\
\hline
 \multicolumn{1}{|l|}{\textbf{Base classifier}} & \multicolumn{1}{|l|}{\textbf{Ensembling}} & \multicolumn{1}{|l|}{\textbf{M}} & \multicolumn{1}{|l|}{\textbf{Reference}} \\
\hline
\hline
ResNet20-Swish           & Bayesian Ens.     &   100         & \cite{pmlr-v139-izmailov21a}            \\
ResNet18         & Deep Ens.           & 5       & \cite{cifar10}         \\
CNN-LSTM             & Bayesian Ens.                  & 100 & \cite{pmlr-v139-izmailov21a}    \\
BERT (fine-tune) & Deep Ens. & 25 & \cite{devlin-etal-2019-bert, multibert}\\
Random Features            & Bagging                  & 30 & N/A    \\
Decision Trees             & Random Forests          & 100         & \cite{scikit-learn, breiman2001random}  \\
\hline
\end{tabular}
}
}
\label{tab:models}
\end{table*}

\begin{figure}[h!]
    \centering
    \includegraphics[scale=0.275]{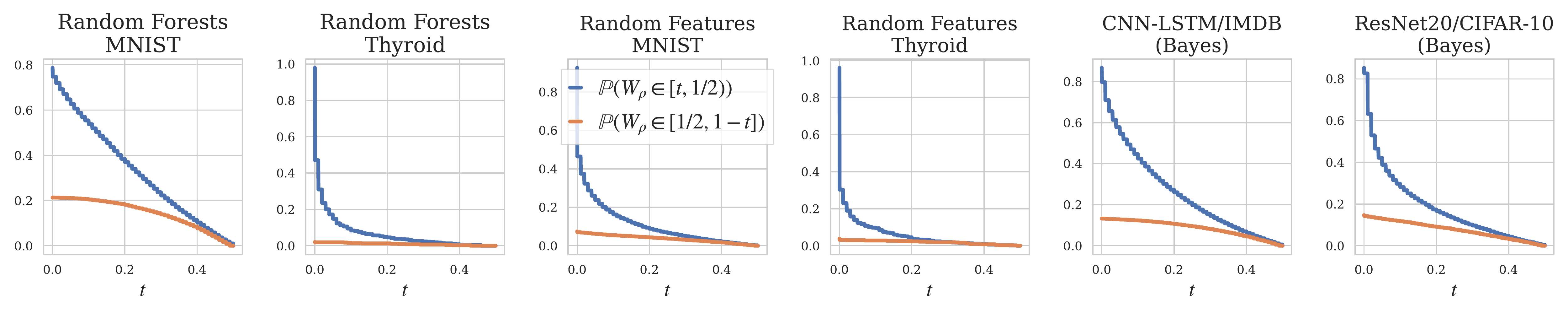}
    \caption{\textbf{Verifying the competence assumption in practice.} $W_\rho(X,Y)$ in Assumption \ref{assumption-1} is estimated using hold-out data. Across all tasks, \fcolorbox{black}{blue}{\rule{0pt}{4pt}\rule{4pt}{0pt}} $>$ \fcolorbox{black}{orange}{\rule{0pt}{4pt}\rule{4pt}{0pt}}, supporting Assumption \ref{assumption-1}.}\vspace{-10pt}
    \label{fig:competence}
\end{figure}

\subsection{Verifying competence in practice.} 
Our theoretical results in Section \ref{sec:theory} relied on the competence condition. 
One might wonder whether competent ensembles exist, and if so how ubiquitous they are. 
Here, we test that assumption. 
(That is, we test not just the predictions of our theory, but also the assumptions of our~theory.)

We have observed that the competence assumption is empirically very mild, and that in practice it applies very broadly. 
In Figure \ref{fig:competence}, we estimate both $\P(W_\rho \in [t,1/2))$ and $\P(W_\rho \in [1/2, 1-t])$ on test data, validating that competence holds for various types of ensembles across a subset of tasks. 
To do this, given a test set of examples $\{(\xb_j,y_j)\}_{j=1}^m$ and classifiers $h_1,\dots,h_N$ drawn from $\rho$, we construct the estimator 
$$
\widehat{W}_\rho^{(j)} = \frac{1}{N}\sum_{n=1}^N \mathbb{1}(h_n(\xb_j)\neq y_j),
$$
and we calculate $\P(W_\rho \in [t,1/2))$ and $\P(W_\rho \in [1/2, 1-t])$ from the empirical CDF of $\{\widehat{W}_\rho^{(j)}\}_{j=1}^m$. 
In Appendix~\ref{app:additional-empirical}, we provide additional examples of competence plots across more experimental settings (and we observe substantially the same results).

\subsection{The linear relationship between $\DER$ and $\EI$.}

\begin{wrapfigure}[20]{r}{0.42\textwidth}
    \centering
    \raisebox{0pt}[\dimexpr\height- 1\baselineskip\relax]{
    \includegraphics[scale=0.24]{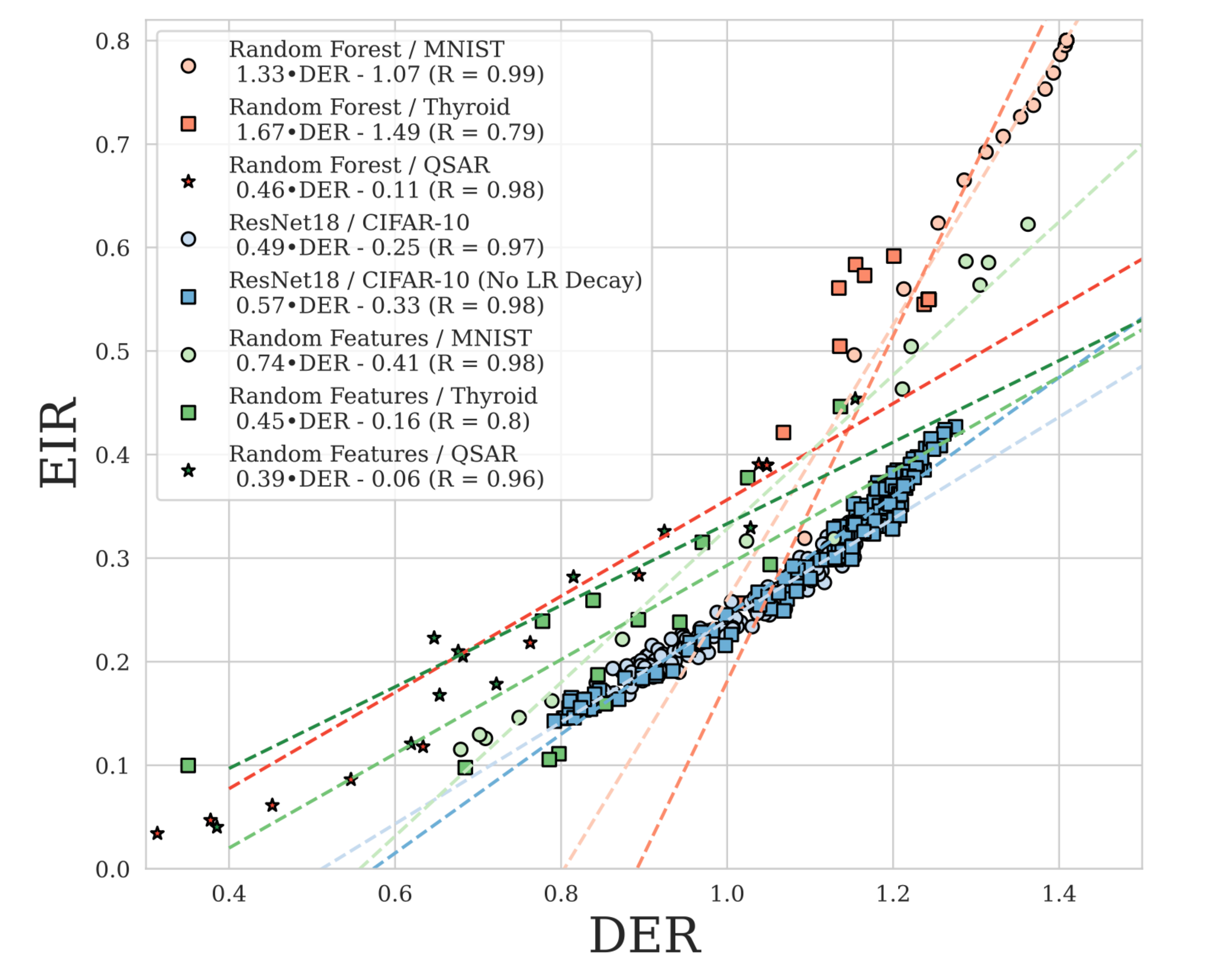}}
    
    
    \caption{\textbf{$\EI$ is linearly correlated with the $\DER$.} We plot the $\EI$ against the $\DER$ across a variety of experimental settings, and we observe a close linear relationship between the $\EI$ and $\DER$, as predicted by our theoretical results. In the legend, we also report the equation for the line of best fit within each setting, as well as the Pearson correlation.}
    \label{fig:EIR_vs_DER_scatter}
\end{wrapfigure}

Theorem \ref{thm:eir-der-linear} predicts a linear relationship between the $\EI$ and the $\DER$; here we verify that this relationship holds empirically. In Figure \ref{fig:EIR_vs_DER_scatter}, we plot the $\EI$ against the $\DER$ across several experimental settings, varying capacity hyper-parameters (width for the ResNet18 models, number of random features for the random feature classifiers, and number of leaf nodes for the random forests), reporting the equation of the line of best fit, as well as the Pearson correlation between the two metrics. Across 6 of the 8 experimental settings evaluated, we find a very strong linear relationship -- with the Pearson $R \geq 0.96$. The two exceptions are found for the Thyroid classification dataset (though there is still a strong trend between the two quantities, with $R \approx 0.8$). 

Interestingly, we can compare the lines of best fit to the theoretical linear relationship predicted by Theorem \ref{thm:eir-der-linear}. In the case of the binary classification datasets (QSAR and Thyroid), the bound predicts that $\EI \approx \DER - 1$; while for the 10-class problems (MNIST and CIFAR-10), the equation is $\EI \approx 1.8\,\DER - 2.6$. While for some examples (e.g., random forests on MNIST), the equations are close to those theoretically predicted, there is a clear gap between theory and the experimentally measured relationships for other tasks. In particular, we notice a significant difference in the governing equations for the same datasets between random forests and the random feature ensembles, suggesting that the relationship is to some degree modulated by the model architecture -- something our theory cannot capture. We therefore see refining our theory to incorporate such information as an important direction for future work.

\vspace{-2mm}

\section{Ensemble improvement is low in the interpolating regime}
\label{sec:interpolation}

In this section, we will show that the $\DER$ behaves qualitatively differently for interpolating versus non-interpolating ensembles, in particular exhibiting behavior associated with phase transitions.
Such phase transitions are well-known in the statistical mechanics approach to learning~\cite{EB01_BOOK,MM17_TR,pmlr-v130-theisen21a,YaoqingTaxonomizing}, but they have been viewed as surprising from the more traditional approach to statistical learning theory~\cite{BHMM19}.
We will use this to understand when ensembling is and is not effective for deep ensembles, and to explain why tree-based methods seem to benefit so much from ensembling across all settings.
We say that a model is \emph{interpolating} if it achieves exactly zero training error, and we say that it is \emph{non-interpolating} otherwise; we call an ensemble interpolating if each of its constituent classifiers is interpolating. Note that for methods that involve resampling of the training data (e.g., bagging methods), we define the training error as the ``in-bag'' training, i.e., the error evaluated only on the points a classifier was trained on.

\begin{figure}[t]
\centering
\begin{minipage}{0.48\textwidth}
    \centering
    \includegraphics[scale=0.38]{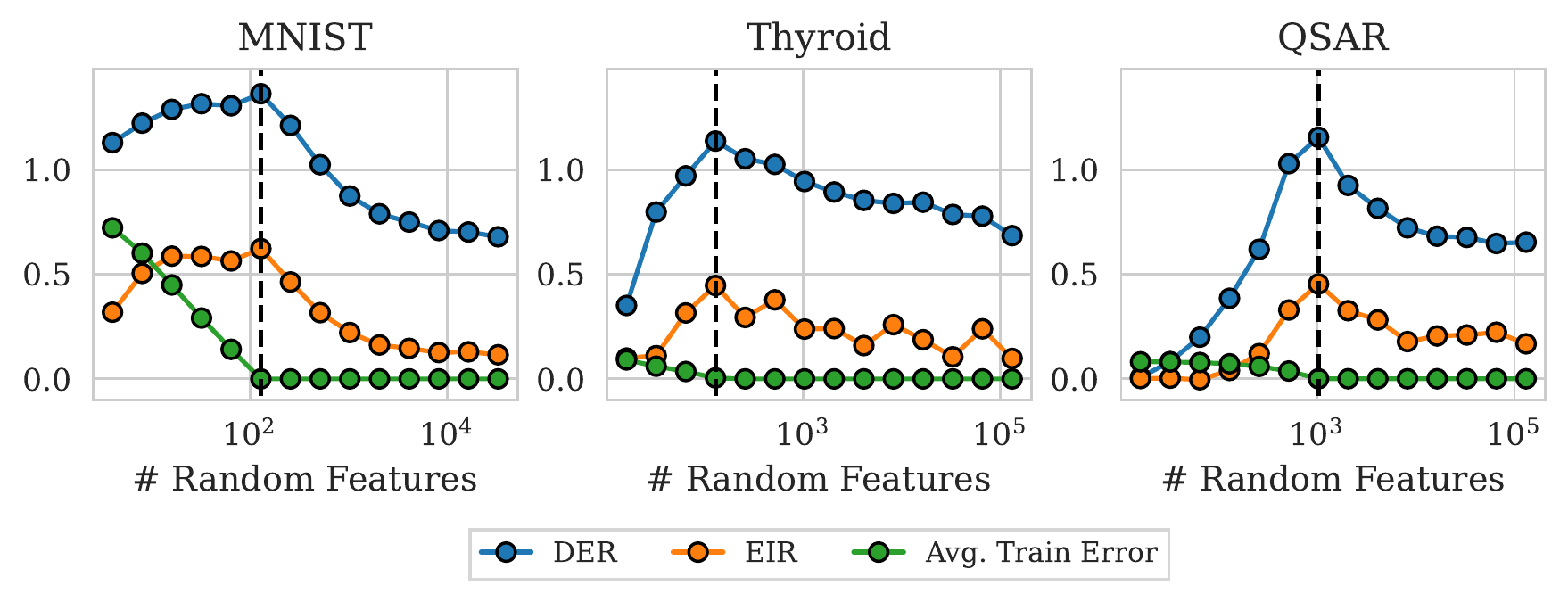}
    \captionof{figure}{\textbf{Bagged random feature classifiers.} Blacked dashed line represents the interpolation threshold. Across all tasks, $\DER$ and $\EI$ are maximized at this point, and then decrease thereafter.}
    \label{fig:brrf-interpolation}
\end{minipage}
\hfill
\begin{minipage}{0.48\textwidth}
    \centering
    \includegraphics[scale=0.38]{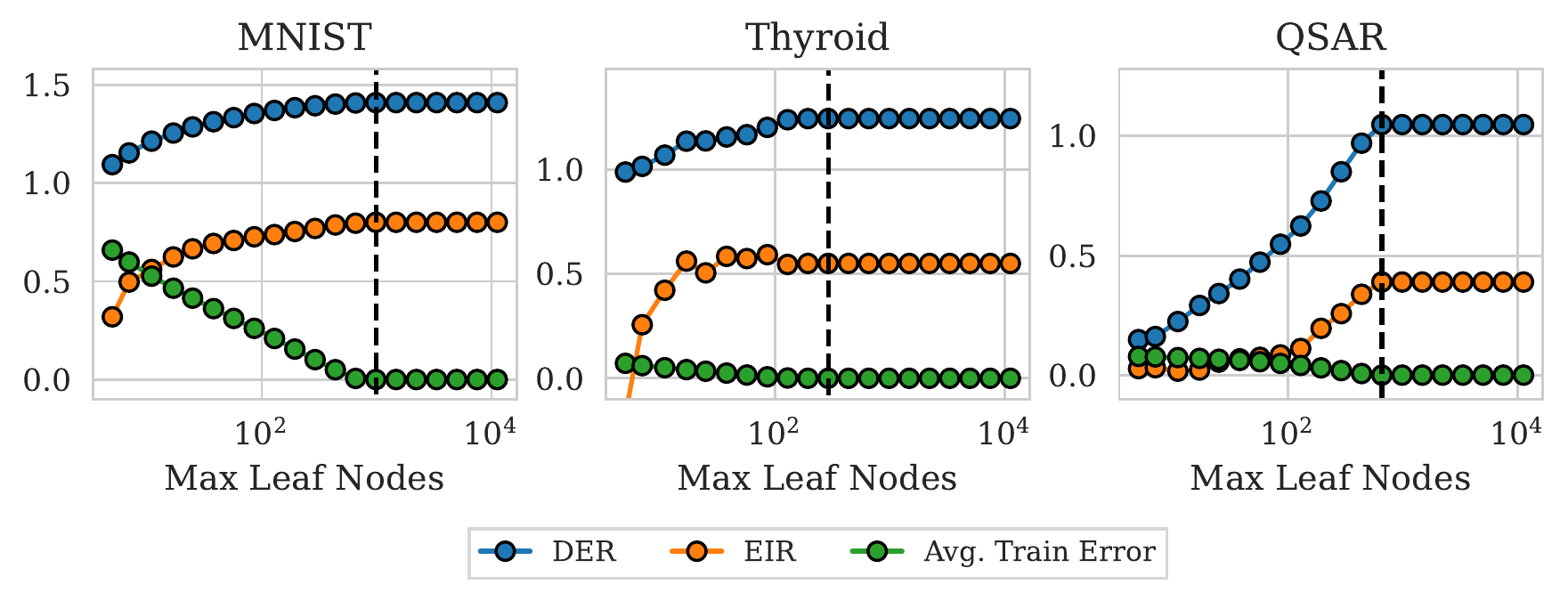}
    \captionof{figure}{\textbf{Random forest classifiers.} Blacked dashed line represents the interpolation threshold. Across all tasks, $\DER$ and $\EI$ are maximized at this point, and then remain constant thereafter.}
    \label{fig:rf-interpolation}
\end{minipage}
\vspace{-2mm}
\end{figure}
\begin{figure}[t]
    \centering
    \begin{subfigure}{0.48\textwidth}
        \includegraphics[scale=0.54]{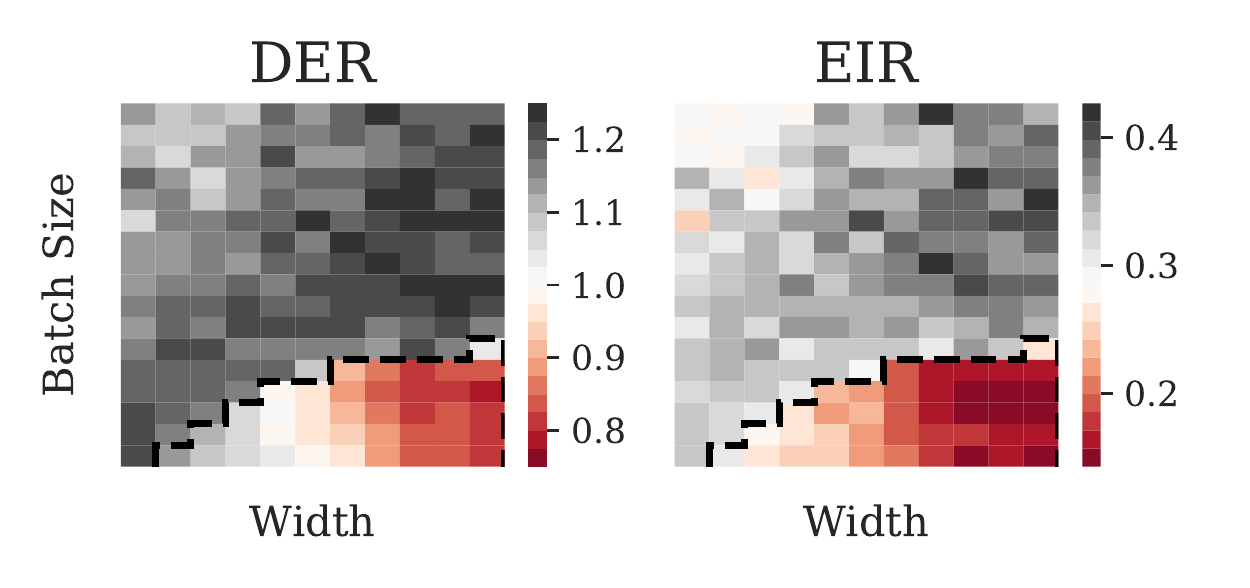} \vspace{-2mm}
        \caption{\textbf{Without LR decay.}}
    \end{subfigure}
    \hfill 
    \begin{subfigure}{0.48\textwidth}
        \includegraphics[scale=0.54]{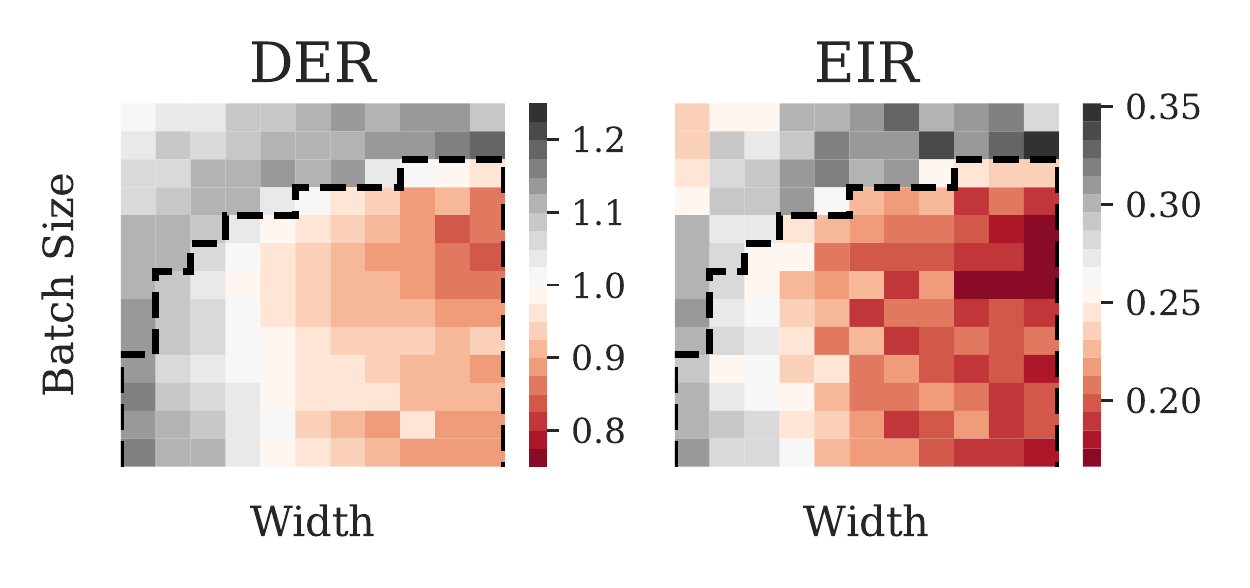} \vspace{-2mm}
        \caption{\textbf{With LR decay.}}
    \end{subfigure}
    \caption{\textbf{Large scale studies of deep ensembles on ResNet18/CIFAR-10.} We plot the $\DER$, $\EI$, average error and majority vote error rate across a range of hyper-parameters, for two training settings: one with learning rate decay, and one without. The black dashed line indicates the \emph{interpolation threshold}, i.e., the curve below which individual models achieve exactly zero training error. Observe that interpolating ensembles attain distinctly lower $\EI$ than non-interpolating ensembles, and correspondingly have low $\DER$ ($<1$), compared to non-interpolating ensembles with high $\DER$ ($>1$).
    } \vspace{-5pt}
    \label{fig:resnet-interpolation}
\end{figure}
\begin{figure}[t]
\centering
\begin{minipage}{0.42\textwidth}
    \centering
    \includegraphics[scale=0.38]{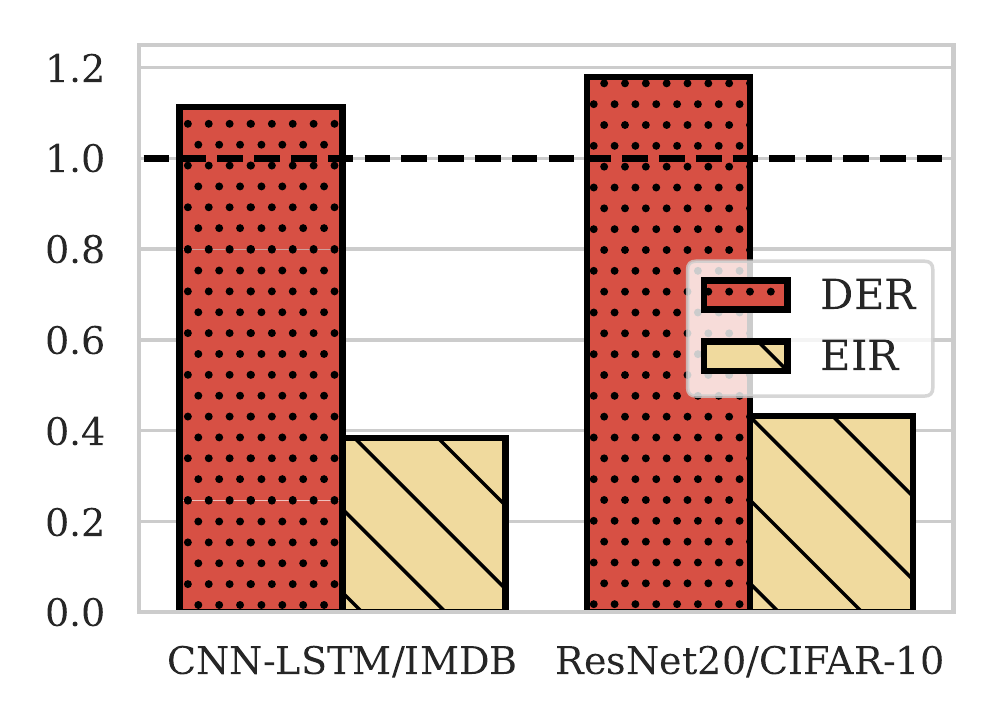}
    \vspace{-3mm}
    \captionof{figure}{\textbf{Bayesian ensembles on IMDB and CIFAR-10.} These provide examples of ensembles which do \emph{not} interpolate the training data, and which have high $\DER$ ($>1$) and correspondingly high~$\EI$.}
    \label{fig:bayes-imdb-cifar10}
\end{minipage}
\hspace{5mm}
\begin{minipage}{0.53\textwidth}
    \centering
    \includegraphics[scale=0.41]{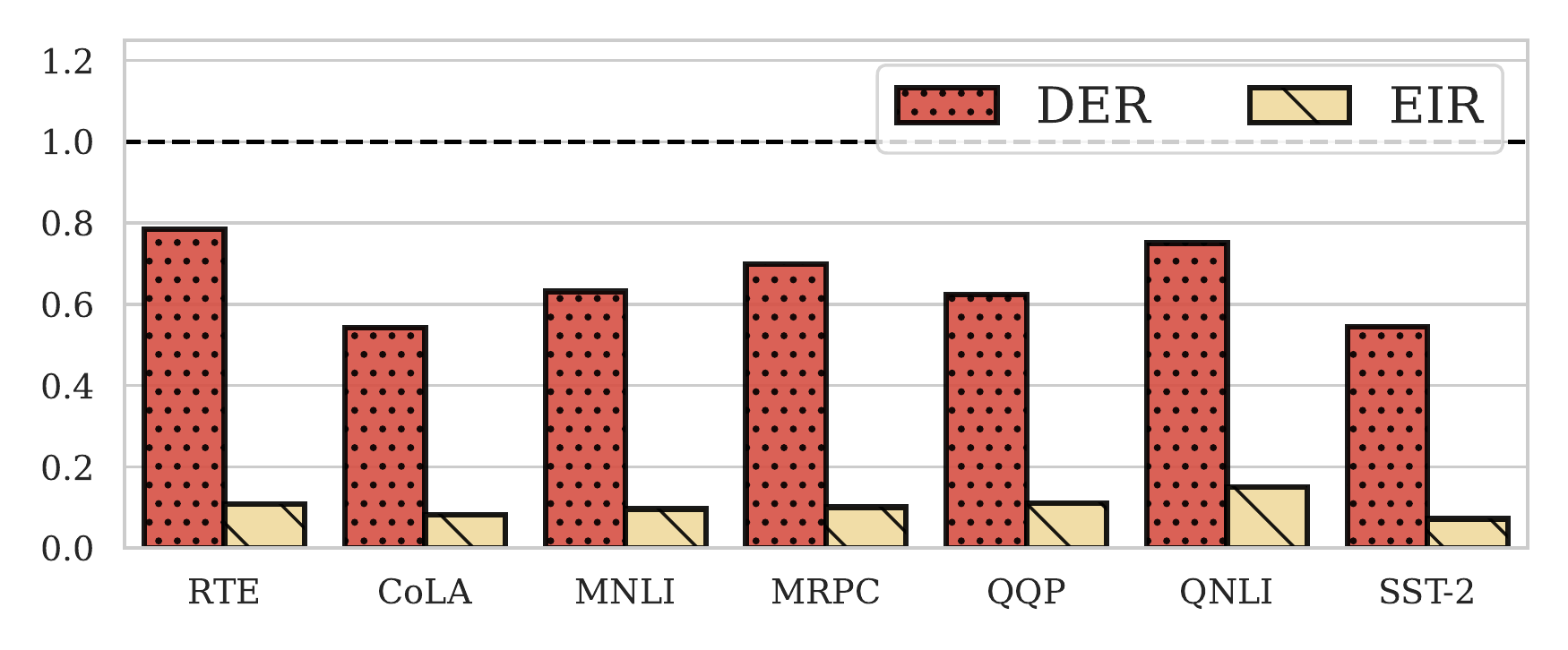} \vspace{-2mm}
    \caption{\textbf{Ensembles of fine-tuned BERT on GLUE tasks.} Here the models are large relative to the dataset size, and consequently they exhibit low $\DER$ ($<1$) and $\EI$ across all tasks.}
    \label{fig:bert-glue}
\end{minipage}

\end{figure}

\paragraph{Interpolating random feature classifiers.}
We first look at the bagged random feature ensembles on the MNIST, Thyroid, and QSAR datasets. 
In Figure \ref{fig:brrf-interpolation}, we plot the $\EI$, $\DER$ and training error for each of these ensembles (recall that for ensembles which use bagging, the training error is computed as the \emph{in-bag} training error). 
We observe the same phenomenon across these three tasks: 
\emph{as a function of model capacity, the $\EI$ and $\DER$ are both maximized at the interpolation threshold, before decreasing thereafter.
This indicates that much higher-capacity models, those with the ability to easily interpolate the training data, benefit significantly less from ensembling.
In particular, observe that for sufficiently high-capacity ensembles, the $\DER$ become less than 1, entering the regime in which our theory guarantees low ensemble improvement.}

\paragraph{Interpolating deep ensembles.}  
We next consider the $\DER$, $\EI$, average test error, and majority vote test error rates for large-scale empirical evaluations on ResNet18/CIFAR-10 models in batch size/width space, both with and without learning rate decay. 
See Figure \ref{fig:resnet-interpolation} for the results.
Note that the use of learning rate decay facilitates easier interpolation of the training data during training, hence broadening the range of hyper-parameters for which interpolation occurs.
The figures are colored so that ensembles in the regime $\DER < 1$ are in red, while ensembles with $\DER > 1$ are in grey. 
The black dashed line indicates the interpolation threshold, i.e., the curve in hyper-parameter space below which ensembles achieve zero training error (meaning every classifier in the ensemble has zero training error). 
\emph{Observe in particular that, across all settings, \emph{all} models in the regime $\DER < 1$ are interpolating models; and, more generally, that interpolating models tend to exhibit \emph{much} smaller $\DER$ than non-interpolating models.} 
Observe moreover that there can be a sharp transition between these two regimes, wherein the $\DER$ is large just at the interpolation threshold, and then it quickly decreases beyond that threshold. 
Correspondingly, ensemble improvement is \emph{much} less pronounced for interpolating versus more traditional non-interpolating ensembles. 
This is consistent with results previously observed in the literature, e.g., \cite{geiger2020scaling}. 
We remark also that the behavior observed in Figure \ref{fig:resnet-interpolation} exhibits the same phases identified in \cite{YaoqingTaxonomizing} (an example of phase transitions in learning more generally~\cite{EB01_BOOK,MM17_TR}), although the $\DER$ itself was not considered in that previous~study.

\paragraph{Bayesian neural networks.}
Next, we show that Bayesian neural networks benefit significantly from ensembling. In Figure \ref{fig:bayes-imdb-cifar10}, we plot the $\DER$ and $\EI$ for Bayesian ensembles on the CIFAR-10 and IMDB tasks, using the ResNet20 and CNN-LSTM architectures, respectively. The samples we present here are provided by \cite{izmailov2021dangers}, who use Hamiltonian Monte Carlo to sample accurately from the posterior distribution over models. For both tasks, we observe that both ensembles exhibit $\DER > 1$ and high $\EI$. In light of our findings regarding ensemble improvement and interpolation, we note that the Bayesian ensembles by design do \emph{not} interpolate the training data (when drawn at non-zero temperature), as the samples are drawn from a distribution not concentrated only on the modes of the training loss. While we do not perform additional experiments with Bayesian neural networks in the present work, evaluating the $\DER$/$\EI$ as a function of posterior temperature for these models is an interesting direction for future work. We hypothesize that the qualitative effective of decreasing the sampling temperature will be similar to that of increasing the batch size in the plots in {Figure~\ref{fig:resnet-interpolation}.}

\paragraph{Fine-tuned BERT ensembles.}
Here, we present results for ensembles of BERT models fine-tuned on the GLUE classification tasks. These provide examples of a very large model trained on small datasets, on which interpolation is easily possible. For these experiments, we use 25 BERT models pre-trained from independent initializations provided in \cite{multibert}. Each of these 25 models is then fine-tuned on the 7 classification tasks in the GLUE \cite{wang2018glue} benchmark set and evaluated on relatively small test sets, ranging in size from ~250 (RTE) to ~40,000 (QQP) samples. In Figure \ref{fig:bert-glue}, we plot the $\EI$ and $\DER$ across these benchmark tasks and observe that, as predicted, the ensemble improvement rate is low and $\DER$ is uniformly low ($<1$).

\paragraph{The unique case of random forests.}
Random forests are one of the most widely-used ensembling methods in practice. 
Here, we show that the effectiveness of ensembling is much greater for random forest models than for highly-parameterized models like the random feature classifiers and the deep ensembles. 
Note that for random forests, interpolation of the training data \emph{is} possible, in particular whenever the number of terminal leaf nodes is sufficiently large (where here we again compute the average training error using on the in-bag training examples for each tree), 
but it is not possible to go ``into'' the interpolating regime. 
In Figure \ref{fig:rf-interpolation}, we plot the $\DER$, $\EI$, and training error as a function of the max number of leaf nodes (a measure of model complexity). 
Before the interpolation threshold, both the $\EI$ and $\DER$ increase as a function of model capacity, in line with what is observed for the random feature and deep ensembles. 
However, we observe distinct behavior at the interpolation threshold: both $\EI$ and $\DER$ become constant past this threshold. 
This is fundamental to tree-based methods, due to the method by which they are fit, e.g., using a standard procedure like CART \cite{trees-book-breiman}. 
As soon as a tree achieves zero training error, any impurity method used to split the nodes further is saturated at zero, and therefore the models cannot continue to grow. 
This indicates that trees are particularly well-suited to ensembling across all hyper-parameter values, in contrast to other parameterized types of classifiers. 
\section{Discussion and conclusion}

To help answer the question of when ensembling is effective, we introduce the ensemble improvement rate ($\EI$), which we then study both theoretically and empirically. Theoretically, we provide a comprehensive characterization of the $\EI$ in terms of the disagreement-error ratio $\DER$. 
The results are based on a new, mild condition called \emph{competence}, which we introduce to rule out pathological cases that have hampered previous theoretical results. 
Using a simple first-order analysis, we show that the competence condition is sufficient to guarantee that ensembling cannot hurt performance---something widely observed in practice, but surprisingly unexplained by existing theory. 
Using a second-order analysis, we are able to theoretically characterize the $\EI$, by upper and lower bounding it in terms of a linear function of the $\DER$. On the empirical side, we first verify the assumptions of our theory (namely that the competence assumption holds broadly in practice), and we show that our bounds are indeed descriptive of ensemble improvement in practice. We then demonstrate that improvement decreases precipitously for interpolating ensembles, relative to non-interpolating ones, providing a very practical guideline for when to use ensembling. 


Our work leaves many directions to explore, of which we name a few promising ones. First, while our theory represents a significant improvement on previous results, there are still directions to extend our analysis. For example, Figure \ref{fig:EIR_vs_DER_scatter} suggests that the relationship between $\EI$ and $\DER$ can be even more finely characterized. Is it possible to refine our analysis further to incorporate information about the data and/or model architecture? Second, can we formalize the connection between ensemble effectiveness and the interpolation point, and relate it to similar ideas in the literature? 


\paragraph{\textbf{Acknowledgments.}}
We would like to acknowledge the DOE, IARPA, NSF, and ONR as well as a J. P. Morgan Chase Faculty Research Award for providing partial support of this work.

\bibliographystyle{alpha}
\bibliography{references.bib}

\appendix


\section{Proofs of our main results}
\label{app:proofs}


In this section, we provide proofs for our main results.
Throughout the section, we denote $\P_{h, h' \sim \rho^2}$, $\P_{h \sim \rho}$, $\E_{h \sim \rho}$ by $\P_{h,h'}$, $\P_h$, $\E_h$, respectively. 
We also denote $W_\rho(X, Y)$ simply by $W_\rho$.
We typically omit explicit dependence on the data distribution $\Dc$ when it is apparent from context. 


\subsection{Proof of Theorem \ref{thm:Always}}
\label{proof:thm:Always}

We first state and prove two lemmas that will be used in the proof of Theorem \ref{thm:Always}.
Our first lemma states that majority-vote error $\testerr_\Dc[h_\mv]$ is upper bounded by probability of $W_\rho(X,Y)$ being large.

\begin{lemma}
\label{lemma:mv} There is the inequality
$\testerr_\Dc[h_\mv] \leq \P_\Dc(W_\rho(X,Y) \geq 1/2),$
where $\testerr_\Dc(h) = \E_{\Dc}[\mathbb{1}(h(X)\neq Y)]$, $h_{\mv}(\xb) = \argmax_{j}\; \E_{h}[\mathbb{1}(h(\xb)=j)]$ and $W_\rho(X,Y) = \mathbb{E}_{h}[\mathbb{1}(h(X) \neq Y)]$.
\end{lemma}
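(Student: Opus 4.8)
The plan is to reduce the statement to a pointwise (almost-sure) comparison of indicators and then integrate. Concretely, I would show that the event $\{h_\mv(X)\neq Y\}$ is contained in the event $\{W_\rho(X,Y)\ge 1/2\}$, so that $\mathbb{1}(h_\mv(X)\neq Y)\le \mathbb{1}(W_\rho(X,Y)\ge 1/2)$ holds for every $(X,Y)$; taking $\E_\Dc$ of both sides immediately yields $\testerr_\Dc(h_\mv)\le \P_\Dc(W_\rho(X,Y)\ge 1/2)$.

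The core step is the set inclusion, which I would prove by a short pigeonhole argument on the ensemble vote shares. Fix an input $\xb$ and write $p_j(\xb)=\E_{h\sim\rho}[\mathbb{1}(h(\xb)=j)]$, so that $\sum_{j=1}^K p_j(\xb)=1$ and $W_\rho(\xb,y)=1-p_y(\xb)$ for any label $y$. Suppose $h_\mv(X)\neq Y$, and set $j^\star := h_\mv(X)$. By definition of $h_\mv$ as an $\argmax$, $p_{j^\star}(X)=\max_j p_j(X)\ge p_Y(X)$; since $j^\star\neq Y$ are distinct indices, $p_{j^\star}(X)+p_Y(X)\le \sum_j p_j(X)=1$. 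Hence $2p_Y(X)\le p_{j^\star}(X)+p_Y(X)\le 1$, giving $p_Y(X)\le 1/2$, i.e. $W_\rho(X,Y)=1-p_Y(X)\ge 1/2$. This establishes $\{h_\mv(X)\neq Y\}\subseteq\{W_\rho(X,Y)\ge 1/2\}$.

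The only subtlety worth flagging is the tie-breaking convention in the $\argmax$ defining $h_\mv$: the argument only uses that $h_\mv(X)$ is \emph{some} maximizer of $j\mapsto p_j(X)$, so $p_{h_\mv(X)}(X)\ge p_Y(X)$ regardless of how ties are resolved, and no assumption on $\rho$ or $\Dc$ (in particular, not competence) is needed. I do not expect any genuine obstacle here — once the indicator inequality is in place, the conclusion follows by monotonicity of expectation, and the whole proof is essentially the one-line observation that a losing class can capture at most half the vote mass.
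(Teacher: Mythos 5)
Your proof is correct and is essentially the paper's argument: the paper proves the same set inclusion in contrapositive form (if $W_\rho(X,Y) < 1/2$ then the true label receives more than half the vote mass, so $h_\mv$ is correct), and then integrates, exactly as you do. Your version just spells out the pigeonhole step and the tie-breaking point more explicitly than the paper's one-line proof.
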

\begin{proof}
For given data point $x$, $W_\rho < 1/2$ implies that we are predicting the true label correctly more than half of the time. Thus, the majority vote classifier will correctly predict the label on the data point.  
\end{proof}

Our next lemma states a property of competent classifiers which plays a crucial role in the main proof. 

\begin{lemma}
\label{lemma:stoch_dominant}
Under Assumption \ref{assumption-1} (competence), for any increasing function $h$ satisfying $h(0) = 0$,
\begin{align*}
    \E_\Dc[h(W_\rho) \mathbb{1}_{W_\rho < 1/2}] \;\geq\; \E_\Dc[h(\bar{W_\rho}) \mathbb{1}_{\bar{W_\rho} \leq 1/2}],
\end{align*}
where $\bar{W_\rho} = 1 - {W_\rho}$.
\end{lemma}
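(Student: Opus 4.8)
The plan is to exploit the fact that the competence assumption is, by design, precisely a comparison between the probability that $W_\rho$ falls just below $1/2$ and the probability that it falls the mirror‑image amount just above $1/2$; and that, via the layer‑cake (Lebesgue--Stieltjes) representation of an increasing function, \emph{both} sides of the claimed inequality decompose into integrals of exactly these tail probabilities against the measure induced by $h$. The lemma then follows by integrating the competence inequality over all levels.

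Concretely, I would proceed as follows. First, it suffices to take $h$ right-continuous: any increasing $h$ agrees with its right-continuous modification off an at most countable set, and the general case follows from this one by a routine monotone-approximation argument, so I will suppress it. For right-continuous increasing $h$ with $h(0)=0$, let $\mu_h$ be the associated nonnegative Lebesgue--Stieltjes measure on $(0,\infty)$, so that $h(x)=\int_{(0,\infty)}\mathbb{1}(s\le x)\,d\mu_h(s)$ for all $x\ge 0$. Substituting this representation and applying Tonelli's theorem (everything in sight is nonnegative), the left-hand side becomes
\[
\E_\Dc\big[h(W_\rho)\,\mathbb{1}_{W_\rho<1/2}\big]=\int_{(0,1/2)}\P_\Dc\big(W_\rho\in[s,1/2)\big)\,d\mu_h(s),
\]
where the integration range collapses to $(0,1/2)$ because $\mathbb{1}(s\le W_\rho)\,\mathbb{1}(W_\rho<1/2)$ vanishes unless $s<1/2$. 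Using $\bar{W_\rho}=1-W_\rho$ and $\{\bar{W_\rho}\le 1/2\}=\{W_\rho\ge 1/2\}$, the same manipulation gives
\[
\E_\Dc\big[h(\bar{W_\rho})\,\mathbb{1}_{\bar{W_\rho}\le 1/2}\big]=\int_{(0,1/2]}\P_\Dc\big(W_\rho\in[1/2,1-s]\big)\,d\mu_h(s).
\]
Now comes the one genuinely important step: applying the competence assumption with $t=1/2$ gives $\P_\Dc(W_\rho\in[1/2,1/2))\ge\P_\Dc(W_\rho\in[1/2,1/2])$, i.e.\ $\P_\Dc(W_\rho=1/2)=0$, so the $s=1/2$ contribution on the right vanishes and that integral may be restricted to $(0,1/2)$ as well. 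Finally, for each $s\in(0,1/2)$ the competence assumption with $t=s$ is precisely the pointwise bound $\P_\Dc(W_\rho\in[s,1/2))\ge\P_\Dc(W_\rho\in[1/2,1-s])$; integrating this inequality against $\mu_h$ over $(0,1/2)$ yields the claim.

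I expect the only subtle point --- and the part most worth getting right --- to be the behavior at the decision threshold $1/2$. A priori, mass of $W_\rho$ sitting exactly at $1/2$ (or a jump of $h$ there) contributes to the right-hand side but never to the left-hand side, which would defeat the inequality; this is exactly why the competence assumption pairs the half-open interval $[t,1/2)$ with the closed interval $[1/2,1-t]$, and its $t=1/2$ instance forces $\P_\Dc(W_\rho=1/2)=0$. Everything else --- the Tonelli interchange, keeping track of the open versus closed endpoints of the tail-probability intervals, and the reduction to right-continuous (or continuous) $h$ --- is routine. As a sanity check, specializing to $h(x)=x$ reproduces exactly the inequality $\E_\Dc[\bar{W_\rho}\,\mathbb{1}_{\bar{W_\rho}\le 1/2}]\le\E_\Dc[W_\rho\,\mathbb{1}_{W_\rho<1/2}]$, which together with Lemma~\ref{lemma:mv} and $\E_\Dc[W_\rho]=\E_{h\sim\rho}[\testerr(h)]$ gives $\testerr(h_\mv)\le\P_\Dc(W_\rho\ge 1/2)\le\E_{h\sim\rho}[\testerr(h)]$, i.e.\ Theorem~\ref{thm:Always}.
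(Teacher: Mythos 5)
Your proof is correct and is essentially the paper's own argument: both reduce the claim to the pointwise competence inequality $\P_\Dc(W_\rho\in[t,1/2))\ge\P_\Dc(W_\rho\in[1/2,1-t])$ and integrate it over levels --- you via the Lebesgue--Stieltjes/Tonelli decomposition of $h$, the paper via the equivalent route of showing the survival function of $W_\rho\mathbb{1}_{W_\rho<1/2}$ dominates that of $\bar{W_\rho}\mathbb{1}_{\bar{W_\rho}\le 1/2}$, pushing this through the increasing map $h$, and applying $\E X=\int_0^\infty \P(X\ge x)\,\de x$. Your explicit handling of the endpoint (competence at $t=1/2$ forcing $\P_\Dc(W_\rho=1/2)=0$) is exactly the fact the paper uses implicitly at $x=1/2$, and your suppressed reduction from general increasing $h$ to right-continuous $h$ is harmless here since the lemma is only ever applied with the continuous choices $h(x)=x$ and $h(x)=2x^2$.
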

\begin{proof}
For every $x \in [0,1]$,
\begin{align*}
\P_\Dc({W_\rho} \mathbb{1}_{{W_\rho} < 1/2} \geq x) 
& = \P_\Dc({W_\rho}\in [x, 1/2)) \, \mathbb{1}_{x \leq 1/2},  \\
\P_\Dc(\bar{W_\rho} \mathbb{1}_{\bar{W_\rho} \leq 1/2} \geq x) 
& = \P_\Dc(\bar{W_\rho} \in [x, 1/2]) \, \mathbb{1}_{x \leq 1/2}
= \P_\Dc(W_\rho \in [1/2, 1-x]) \, \mathbb{1}_{x \leq 1/2}.
\end{align*}
From Assumption \ref{assumption-1}, this implies that $\P_\Dc({W_\rho} \mathbb{1}_{{W_\rho} < 1/2} \geq x) \geq \P_\Dc(\bar{W_\rho} \mathbb{1}_{\bar{W_\rho} \leq 1/2} \geq x)$ for all $x \in [0,1]$. Therefore, for any increasing function $h$ satisfying $h(0) = 0$, since $h(x\,\mathbb{1}_{x\leq c}) = h(x)\mathbb{1}_{x\leq c}$, 
\begin{align*}
\P_\Dc(h({W_\rho}) \mathbb{1}_{{W_\rho} < 1/2} \geq x) \geq \P_\Dc(h(\bar{W_\rho}) \mathbb{1}_{\bar{W_\rho} \leq 1/2} \geq x).
\end{align*}
As $W_\rho$ is non-negative, the equality $\E X = \int_0^\infty \P(X \geq x) \mathrm{d}x$ concludes the proof.
\end{proof}

With these two lemmas, we now provide the proof of Theorem~\ref{thm:Always}.

\begin{proof}[Proof of Theorem \ref{thm:Always}]
From Lemma \ref{lemma:mv} and the relation $\E_h[\testerr_\Dc(h)] = \E_\Dc[W_\rho]$ (Fubini's theorem), it suffices to show that $\P_\Dc(W_\rho \geq 1/2) \leq \E_\Dc[W_\rho]$. 
To do so, observe
\begin{align*}
    \E_\Dc[(W_\rho-1) \mathbb{1}_{W_\rho \geq 1/2}] + \E_\Dc[\bar{W_\rho} \mathbb{1}_{\bar{W_\rho} \leq 1/2}]
    = \E_\Dc[(W_\rho-1) \mathbb{1}_{W_\rho \geq 1/2}] + \E_\Dc[(1-W_\rho) \mathbb{1}_{W_\rho \geq 1/2}] = 0.
\end{align*}
Applying Lemma \ref{lemma:stoch_dominant} with $h(x) = x$,
\begin{align*}
    \E_\Dc[W_\rho] - \P_\Dc(W_\rho \geq 1/2) 
    & \geq \E_\Dc[(W_\rho-1)\mathbb{1}_{W_\rho \geq 1/2}] + \E_\Dc[W_\rho\mathbb{1}_{W_\rho < 1/2}] \\
    & \geq \E_\Dc[(W_\rho-1)\mathbb{1}_{W_\rho \geq 1/2}] + \E_\Dc[\bar{W_\rho} \mathbb{1}_{\bar{W_\rho} \leq 1/2}]
    = 0.
\end{align*}
which proves 
\begin{equation}
\begin{aligned} \label{eq:app_pf_always}
    \testerr_\Dc[h_\mv] \underset{\text{Lemma} \ref{lemma:mv}}{\leq} \P_\Dc(W_\rho \geq 1/2) {\leq} \E_\Dc[W_\rho] = \E_h[\testerr_\Dc(h)].   
\end{aligned}
\end{equation}
This implies $\EI \geq 0$.
\end{proof}


\subsection{Proof of Theorem \ref{thm:eir-der-linear}}
\label{proof:thm:eir-der-linear}

\subsubsection{Lower bound of $\EI$} \label{app:pf_lb}
\vspace{0.1cm}

To prove the lower bound, we first define the tandem loss, as used in \cite{second-order-mv-bounds2020}.

\begin{definition}[Tandem loss]
Define the tandem loss to be $\testerr(h,h') = \E_{\Dc}[\mathbb{1}(h(X)\neq Y)\mathbb{1}(h'(X)\neq Y)]$.
\end{definition}

We also rely on the following lemma, which appears as Lemma 2 in \cite{second-order-mv-bounds2020}. It provides the connection between the average error rate for each data point, $W_\rho$ and the tandem loss, $\testerr(h,h')$.
\begin{lemma} 
\label{lemma:tandem} 
The equality $\E_{\Dc}[{W_\rho}^2] = \E_{h,h'}[\testerr(h,h')]$ holds.
\end{lemma}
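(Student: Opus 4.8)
The plan is to unfold the square $W_\rho^2$ into a double integral over two \emph{independent} copies of $\rho$, and then interchange the order of integration. Concretely, fix a realization $(X,Y)\sim\Dc$; then $W_\rho(X,Y)$ is a fixed scalar, namely $\int \mathbb{1}(h(X)\neq Y)\,\rho(\de h)$, so its square factors as
\[
W_\rho(X,Y)^2 = \left(\int \mathbb{1}(h(X)\neq Y)\,\rho(\de h)\right)\left(\int \mathbb{1}(h'(X)\neq Y)\,\rho(\de h')\right) = \E_{h,h'}\!\left[\mathbb{1}(h(X)\neq Y)\,\mathbb{1}(h'(X)\neq Y)\right],
\]
where the last equality is Fubini's theorem applied to the product measure $\rho\times\rho$ (legitimate since the integrand is nonnegative and bounded by $1$), using that $h,h'\sim\rho$ are drawn independently.

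Next I would take the expectation over $(X,Y)\sim\Dc$ of both sides and again invoke Fubini's theorem, this time to swap $\E_\Dc$ with $\E_{h,h'}$ — again legitimate because $0 \le \mathbb{1}(h(X)\neq Y)\,\mathbb{1}(h'(X)\neq Y) \le 1$ — to obtain
\[
\E_\Dc[W_\rho^2] = \E_{h,h'}\!\left[\E_\Dc\!\left[\mathbb{1}(h(X)\neq Y)\,\mathbb{1}(h'(X)\neq Y)\right]\right].
\]
Recognizing the inner expectation as exactly the definition of the tandem loss $\testerr(h,h')$ then gives $\E_\Dc[W_\rho^2] = \E_{h,h'}[\testerr(h,h')]$, which is the claim.

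There is essentially no hard step here: the identity is a bookkeeping exercise resting on the elementary observation that the square of an expectation over $\rho$ equals the expectation over the independent product $\rho\times\rho$. The only point requiring (entirely routine) care is the justification of the two interchanges of integration, which follow immediately from the boundedness of indicator functions; measurability of $(h,h',X,Y)\mapsto \mathbb{1}(h(X)\neq Y)\mathbb{1}(h'(X)\neq Y)$ is implicit in the standing setup.
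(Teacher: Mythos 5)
Your proof is correct, and it is the standard argument: the paper itself does not prove this lemma but cites it as Lemma~2 of \cite{second-order-mv-bounds2020}, and your factorization of $W_\rho^2$ into an expectation over two independent draws $h,h'\sim\rho$ followed by Fubini is exactly the argument given there. Nothing is missing; the boundedness remark fully justifies the interchanges.
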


We first state and  prove the following lemma, which provides an upper bound on the tandem loss.

\begin{lemma}\label{lemma:kclass_tandem_ub}
For the $K$-class problem,
\begin{align*}
    \E_{h,h'}[\testerr(h,h')] \leq \frac{2(K-1)}{K}\left(\E_h[\testerr(h)] - \frac{1}{2}\E_{h,h'}[\dis(h,h')]\right).
\end{align*}
\end{lemma}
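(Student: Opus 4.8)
\textbf{Proof proposal for Lemma \ref{lemma:kclass_tandem_ub}.}

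The plan is to work pointwise in $(X,Y)$: fix a data point and analyze the distribution of $h(X)$ over $h \sim \rho$, then take expectations over $\Dc$ at the end. For a fixed input $x$ with true label $y$, let $p_j = \E_{h\sim\rho}[\mathbb{1}(h(x)=j)]$ denote the probability that a random classifier predicts class $j$, so that $\sum_{j=1}^K p_j = 1$ and $W_\rho = \sum_{j\neq y} p_j = 1 - p_y$. The key observation is that, conditioned on the data point, the tandem loss integrand is $\P_{h,h'}(h(x)\neq y,\ h'(x)\neq y) = (1-p_y)^2 = W_\rho^2$ (this is exactly the pointwise content of Lemma \ref{lemma:tandem}), while the disagreement integrand is $\P_{h,h'}(h(x)\neq h'(x)) = 1 - \sum_{j=1}^K p_j^2$. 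So, after passing expectations through, it suffices to prove the pointwise inequality
\begin{align*}
W_\rho^2 \;\leq\; \frac{2(K-1)}{K}\left(W_\rho - \frac12\Bigl(1 - \sum_{j=1}^K p_j^2\Bigr)\right),
\end{align*}
and then take $\E_\Dc[\cdot]$ of both sides, using $\E_\Dc[W_\rho^2] = \E_{h,h'}[\testerr(h,h')]$, $\E_\Dc[W_\rho] = \E_h[\testerr(h)]$, and $\E_\Dc[1-\sum_j p_j^2] = \E_{h,h'}[\dis(h,h')]$.

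To establish the pointwise inequality, the main step is to lower-bound $\sum_{j=1}^K p_j^2$ in terms of $W_\rho$. Split off the true-label term: $\sum_{j=1}^K p_j^2 = p_y^2 + \sum_{j\neq y} p_j^2 = (1-W_\rho)^2 + \sum_{j\neq y} p_j^2$. The $K-1$ off-label probabilities are nonnegative and sum to $W_\rho$, so by Cauchy--Schwarz (or power-mean) $\sum_{j\neq y} p_j^2 \geq W_\rho^2/(K-1)$. Substituting, $1 - \sum_j p_j^2 \leq 1 - (1-W_\rho)^2 - W_\rho^2/(K-1) = 2W_\rho - W_\rho^2 - W_\rho^2/(K-1) = 2W_\rho - \frac{K}{K-1}W_\rho^2$. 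Therefore
\begin{align*}
W_\rho - \frac12\Bigl(1 - \sum_j p_j^2\Bigr) \;\geq\; W_\rho - W_\rho + \frac{K}{2(K-1)}W_\rho^2 \;=\; \frac{K}{2(K-1)}W_\rho^2,
\end{align*}
which, multiplied by $\frac{2(K-1)}{K}$, gives exactly $W_\rho^2$ on the right-hand side, completing the pointwise bound. Taking $\E_\Dc$ then yields the lemma.

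The main obstacle is getting the combinatorial/Cauchy--Schwarz step exactly right so that the constant $\frac{2(K-1)}{K}$ comes out sharp: one must use that the mass $W_\rho$ on the off-label classes is spread over exactly $K-1$ classes (the bound $\sum_{j\neq y}p_j^2 \geq W_\rho^2/(K-1)$ is tight when it is spread uniformly), and not, say, simply $\sum_{j\neq y}p_j^2 \geq 0$, which would be too weak. A minor point to check is that the inequality also holds trivially (or with equality) at the boundary cases $W_\rho = 0$ and $W_\rho = 1$, and that passing the pointwise inequality through $\E_\Dc$ is justified simply by monotonicity of expectation. No competence assumption is needed for this lemma — it is a pure convexity fact — which is consistent with how it will later be combined with Lemma \ref{lemma:tandem} and a lower bound on $\E_\Dc[W_\rho^2]$ (or a reverse estimate coming from competence) to produce the stated lower bound on $\EI$.
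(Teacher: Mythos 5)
Your proposal is correct and follows essentially the same route as the paper's proof: both reduce to a pointwise statement in the vote probabilities $p_j=\P_{h\sim\rho}(h(X)=j)$ and both hinge on the same extremal step — your Cauchy--Schwarz bound $\sum_{j\neq y}p_j^2\geq W_\rho^2/(K-1)$ is exactly the paper's constrained maximization of $\sum_{j\neq Y}\bar h_j(1-\bar h_j)$, tight when the off-label mass is uniform over the $K-1$ wrong classes. The only difference is cosmetic bookkeeping (you bound the tandem integrand directly, the paper routes through the identity $\E[\testerr(h,h')]=\E[L(h)]-\E_\Dc[\bar h_Y(1-\bar h_Y)]$ before taking expectations), and your observation that competence is not needed here matches the paper.
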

\begin{proof}
We denote $\P_h(h(X) \neq Y)$ by $\bar{h}_Y(X)$. 
Note that $\E_\Dc(1-\bar{h}_Y(X)) = \E_h[\testerr(h)]$ and 
\begin{align*}
\E_{h,h'}[\testerr(h,h')] 
&= \E_\Dc[\P_h(h(X)\neq Y)\P_{h'}(h'(X)\neq Y)] \\
&= \E_\Dc[(1-\bar{h}_Y(X))^2].    
\end{align*}

Then we get
\begin{align*}
\E_{h,h'}[\testerr(h,h')] &= \E_\Dc[(1-\bar{h}_Y(X))^2] \\
&= 1 - \E_\Dc[\bar{h}_Y(X)] - \E_\Dc[\bar{h}_Y(X)(1-\bar{h}_Y(X))] \\
&= \E_h[\testerr(h)] - \E_\Dc[\bar{h}_Y(X)(1-\bar{h}_Y(X))].
\end{align*}

Now we will derive a lower bound of the second term. 
Since 
\[\E_{h,h'}[\mathbb{1}(h(X)\neq h'(X))] = \sum_j \bar{h}_j(X)(1-\bar{h}_j(X)) ,\]
it follows that
\begin{align*}
\bar{h}_Y(X)(1-\bar{h}_Y(X)) = \E_{h,h'}[\mathbb{1}(h(X)\neq h'(X))] - \sum_{j\neq Y}\bar{h}_j(X)(1-\bar{h}_j(X)) .
\end{align*}

By maximizing $\sum_{j\neq Y}\bar{h}_j(X)(1-\bar{h}_j(X))$ subject to $\sum_{j\neq Y} \bar{h}_j(X) = 1-\bar{h}_Y(X)$, we get $\bar{h}_j(X) = \frac{1-\bar{h}_Y(X)}{K-1}$, which yields the upper bound 
\[
\sum_{j\neq Y}\bar{h}_j(X)(1-\bar{h}_j(X)) \leq \frac{K-2}{K-1}(1-\bar{h}_Y(X)) + \frac{1}{K-1}\bar{h}_Y(X)(1-\bar{h}_Y(X)) .
\]

It follows that 
\[
\E_\Dc[\bar{h}_Y(X)(1-\bar{h}_Y(X))] \geq \frac{K-1}{K}\E_{h,h'}[\dis(h,h')] - \frac{K-2}{K}\E_h[\testerr(h)] ,
\]
and thus that 
\begin{align*}
\E_{h,h'}[\testerr(h,h')] &= \E_h[\testerr(h)] - \E_\Dc[\bar{h}_Y(X)(1-\bar{h}_Y(X))]\\
&\leq \E_h[\testerr(h)] - \left(\frac{K-1}{K}\E_{h,h'}[\dis(h,h')] - \frac{K-2}{K}\E_h[\testerr(h)]\right)\\
&= \frac{2(K-1)}{K}\left(\E_h[\testerr(h)] - \frac{1}{2}\E_{h,h'}[\dis(h,h')]\right).
\end{align*}
\end{proof}

We now provide the proof for the lower bound of $\EI$ in Theorem~\ref{thm:eir-der-linear}.

\begin{proof}[Proof]
We first claim that $\mathbb{P}_\Dc(W_\rho \geq 1/2) \leq 2\,\E_\Dc[{W_\rho}^2]$. 
Then, we have 
\begin{align*}
\E_\Dc[(2{W_\rho}^2 - 1) \mathbb{1}_{{W_\rho} \geq 1/2}] 
& = \E_\Dc[(2(1-\bar{W_\rho})^2 - 1) \mathbb{1}_{\bar{W_\rho} \leq 1/2}] \\
& = \E_\Dc[(1 - 4\bar{W_\rho} + 2\bar{W_\rho}^2) \mathbb{1}_{\bar{W_\rho} \leq 1/2}] ,
\end{align*}
where $\bar{W_\rho} = 1 - {W_\rho}$. Therefore,
\begin{align*}
    \E_\Dc[(2{W_\rho}^2 - 1) \mathbb{1}_{{W_\rho} \geq 1/2}] + \E_\Dc[2\bar{W_\rho}^2 \mathbb{1}_{\bar{W_\rho} \leq 1/2}] 
    & = \E_\Dc[(1 - 4\bar{W_\rho} + 4\bar{W_\rho}^2) \mathbb{1}_{\bar{W_\rho} \leq 1/2}] \geq 0 .
\end{align*}
Now we apply Lemma \ref{lemma:stoch_dominant} with $h(x) = 2x^2$, to obtain
\begin{equation}
\begin{aligned} \label{eq:w_rho}
\E_\Dc[2{W_\rho}^2] - \P_\Dc({W_\rho} \geq 1/2) 
& \geq \E_\Dc[(2{W_\rho}^2 - 1) \mathbb{1}_{{W_\rho} \geq 1/2}] + \E_\Dc[2{W_\rho}^2 \mathbb{1}_{{W_\rho} < 1/2}] \\
& \geq \E_\Dc[(2{W_\rho}^2 - 1) \mathbb{1}_{{W_\rho} \geq 1/2}] + \E_\Dc[2\bar{W_\rho}^2 \mathbb{1}_{\bar{W_\rho} \leq 1/2}] \;\geq 0,
\end{aligned}
\end{equation}
which proves the claim, $\mathbb{P}_\Dc(W_\rho \geq 1/2) \leq 2\,\E_\Dc[{W_\rho}^2]$.

Now we put the claim together with Lemmas \ref{lemma:mv}, \ref{lemma:tandem}, and \ref{lemma:kclass_tandem_ub} to conclude the proof. 
\begin{equation}
\begin{aligned} \label{eq:app_pf_lb}
    \testerr(h_\mv) & \underset{\text{Lemma \ref{lemma:mv}}}{\leq} \mathbb{P}_\Dc(W_\rho \geq 1/2) \leq 2\,\E_\Dc[{W_\rho}^2] \underset{\text{Lemma \ref{lemma:tandem}}}{=} 2\,\E_{h,h'}[\testerr(h,h')] \\
    & \underset{\text{Lemma \ref{lemma:kclass_tandem_ub}}}{\leq} 
    \frac{4(K-1)}{K}\left(\E_h[\testerr(h)] - \frac{1}{2}\E_{h,h'}[\dis(h,h')]\right)
\end{aligned}
\end{equation}
Rearranging the terms, we obtain
\begin{align}\label{eq:rearrange}
    \mathbb{E}_{h}[\testerr_{\Dc}(h)] - \testerr(h_\mv)
    \geq \frac{2(K-1)}{K}\E_{h,h'}[\dis(h,h')] - \frac{3K-4}{K} \E_{h}[\testerr(h)].
\end{align}    
Dividing the both terms by $\E_{h}[\testerr(h)]$ gives the lower bound $\frac{2(K-1)}{K}\DER - \frac{3K-4}{K}$.
\end{proof}


\subsubsection{Upper bound of $\EI$} \label{app:pf_ub} \vspace{0.1cm}

We denote $\P_h(h(X) \neq Y)$ by $\bar{h}_Y(X)$. We have 
\begin{align*}
\E_h[\testerr(h)] - \testerr(h_\mv) &= \E_{h, \Dc}[\mathbb{1}(h(X)\neq Y) - \mathbb{1}(h_\mv(X)\neq Y)] .
\end{align*}
Now 
\begin{align*}
\mathbb{1}(h(X)\neq Y) - \mathbb{1}(h_\mv(X)\neq Y) &= \mathbb{1}(h_\mv(X) = Y)-\mathbb{1}(h(X)= Y)\\
&= \mathbb{1}(h(X)\neq h_\mv(X))\left(\mathbb{1}(h_\mv(X)= Y) - \mathbb{1}(h(X)=Y)\right)\\
&\leq \mathbb{1}(h(X)\neq h_\mv(X)) .
\end{align*}
Now notice $\E_{h, \Dc}[\mathbb{1}(h(X)\neq h_\mv(X))] = 1-\E_\Dc[\max_k \bar{h}_k(X)]$. 
Moreover, by H\"older's inequality,
\begin{align*}
\|\bar{\hb}(X)\|_2^2 \leq \max_k \bar{h}_k(X) ,
\end{align*}
and so
\begin{equation}
\begin{aligned} \label{eq:app_pf_ub}
\E_h[\testerr(h)] - \testerr(h_\mv) &\leq 1-\E_\Dc[\max_k \bar{h}_k(X)]  \\
&\leq 1- \E_\Dc[\|\bar{\hb}(X)\|^2] = \E_{h,h'}[\dis(h,h')].
\end{aligned}
\end{equation}

Dividing the both terms by $\E_{h}[\testerr(h)]$ gives the upper bound $\DER$.


\subsection{Upper and lower bounds on the error rate of the majority vote classifier}
\label{app:mv_bounds}
\vspace{0.1cm}
We now present upper and lower bound on the majority vote classifier that follow from the bounds in Theorem \ref{thm:Always} and \ref{thm:eir-der-linear}, and compare them with existing bounds in the literature.
\begin{theorem}\label{thm:mv_error}
    For any competent ensemble $\rho$ of $K$-class classifiers, the majority vote error rate satisfies
\begin{align*}
    \testerr(h_\mv) & \leq \min\left\{\frac{4(K-1)}{K}\left(\E_{h\sim \Qc}[\testerr(h)] - \frac{1}{2}\E_{h,h'\sim \Qc}[\dis(h,h')]\right)\; , \E_{h\sim\rho}[L(h)] \right\} \\
    \testerr(h_\mv) & \geq \E_{h\sim\rho}[L(h)] - \E_{h,h'\sim \rho}[D(h,h')]. \nonumber
\end{align*}
\end{theorem}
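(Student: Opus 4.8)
The plan is to assemble the three claimed inequalities directly from intermediate computations already carried out in the proofs of Theorems \ref{thm:Always} and \ref{thm:eir-der-linear}; no genuinely new argument is needed, only a rearrangement so that the bounds are stated for $\testerr(h_\mv)$ itself rather than for $\EI$.

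For the upper bounds, first recall the chain \eqref{eq:app_pf_always} from the proof of Theorem \ref{thm:Always}, which combines Lemma \ref{lemma:mv} ($\testerr(h_\mv) \leq \P_\Dc(W_\rho \geq 1/2)$), the competence-based estimate $\P_\Dc(W_\rho \geq 1/2) \leq \E_\Dc[W_\rho]$, and Fubini's identity $\E_\Dc[W_\rho] = \E_{h}[\testerr(h)]$; this yields $\testerr(h_\mv) \leq \E_{h\sim\rho}[\testerr(h)]$. Next recall the chain \eqref{eq:app_pf_lb} from the proof of the lower bound in Theorem \ref{thm:eir-der-linear}, which chains Lemma \ref{lemma:mv}, the competence-based claim $\P_\Dc(W_\rho \geq 1/2) \leq 2\,\E_\Dc[W_\rho^2]$ (equation \eqref{eq:w_rho}), the tandem-loss identity of Lemma \ref{lemma:tandem}, and the $K$-class tandem-loss upper bound of Lemma \ref{lemma:kclass_tandem_ub}; this yields $\testerr(h_\mv) \leq \frac{4(K-1)}{K}\big(\E_{h}[\testerr(h)] - \tfrac12 \E_{h,h'}[\dis(h,h')]\big)$. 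Taking the minimum of the two gives the first display.

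For the lower bound, recall the chain \eqref{eq:app_pf_ub} from the proof of the upper bound in Theorem \ref{thm:eir-der-linear}: bounding the per-example gap by $\mathbb{1}(h(X)\neq Y) - \mathbb{1}(h_\mv(X)\neq Y) \leq \mathbb{1}(h(X)\neq h_\mv(X))$, taking expectations to obtain $\E_{h}[\testerr(h)] - \testerr(h_\mv) \leq 1 - \E_\Dc[\max_k \bar{h}_k(X)]$, and then applying H\"older's inequality $\|\bar{\hb}(X)\|_2^2 \leq \max_k \bar{h}_k(X)$ together with the identity $1 - \E_\Dc[\|\bar{\hb}(X)\|^2] = \E_{h,h'}[\dis(h,h')]$. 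Rearranging gives $\testerr(h_\mv) \geq \E_{h\sim\rho}[\testerr(h)] - \E_{h,h'\sim\rho}[\dis(h,h')]$; note this direction does not use competence.

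Since each ingredient is already established, there is essentially no obstacle; the only care needed is bookkeeping — verifying that the normalization hypothesis $\E_{h}[\testerr(h)] \neq 0$ used to define $\EI$ and $\DER$ is not actually required for these unconditional inequalities on the error rate itself (if $\E_{h}[\testerr(h)] = 0$ then $h_\mv$ is also a.s.\ correct and all bounds hold trivially), and that competence is invoked only for the two upper bounds. The accompanying comparison with prior bounds is then elementary: our first upper bound improves the $4\E[\testerr(h)] - 2\E[\dis(h,h')]$ bound of \cite{second-order-mv-bounds2020} by a factor of $2$ (taking $K=2$), and, under competence, is never worse than the trivial bound $2\E[\testerr(h)]$ of \eqref{eq:trivial-FO}.
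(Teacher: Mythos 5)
Your proposal is correct and matches the paper's own proof, which likewise obtains the two upper bounds by chaining the inequalities established in the proofs of Theorems \ref{thm:Always} and \ref{thm:eir-der-linear} (i.e., \eqref{eq:app_pf_always} and \eqref{eq:app_pf_lb}) and the lower bound from \eqref{eq:app_pf_ub}. Your added bookkeeping remarks---that competence is needed only for the upper bounds and that the $\E_{h}[\testerr(h)]\neq 0$ hypothesis is not required for these unnormalized inequalities---are accurate refinements consistent with the paper's argument.
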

\begin{proof}
    The upper bound follows from inequality \eqref{eq:app_pf_always} and \eqref{eq:app_pf_lb}. The lower bound follows from inequality \eqref{eq:app_pf_ub}.
\end{proof}
We have already discussed that the bound $L(h_\mv) \leq \E[L(h)]$ represents an improvement by a factor of 2 over the naive first-order bound \eqref{eq:trivial-FO}. Here, we further compare the bound 
\begin{align}
\label{eq:our-second-order}
L(h_\mv) \leq \frac{4(K-1)}{K}\left(\E_{h\sim \Qc}[\testerr(h)] - \frac{1}{2}\E_{h,h'\sim \Qc}[\dis(h,h')]\right) 
\end{align}
to other known results in the literature. The closest in form is a bound specialized to binary case from \cite{second-order-mv-bounds2020}, which gives 
\begin{align}
\label{eq:binary-second-order}
L(h_\mv) \leq 4\E_{h\sim \Qc}[\testerr(h)] - 2\E_{h,h'\sim \Qc}[\dis(h,h')].
\end{align}
Note that plugging in $K=2$ to \eqref{eq:our-second-order}, we obtain the bound $2\E_{h\sim \Qc}[\testerr(h)] - \E_{h,h'\sim \Qc}[\dis(h,h')]$, immediately improving on \eqref{eq:binary-second-order} by a factor of 2 (interestingly, the same factor that we save on the first-order bound). Hence, provided the competence assumption holds, our bound is a direct improvement on this bound, and furthermore generalizes directly to the $K$-class setting.

To our knowledge, the sharpest known upper bound on the majority-vote classifier is the general form of the C-bound given in \cite{LAVIOLETTE201715}, which states, provided $\E[M_\rho(X,Y)] > 0$, 
\begin{align}
\label{eq:app-c-bound}
\testerr(h_\mv) \leq 1 -\frac{\E[M_\rho(X,Y)]^2}{\E[M_\rho^2(X,Y)]},
\end{align}
where $M_\rho(X,Y) = \E_{h\sim \rho}[ \mathbb{1}(h(X) = Y)] - \max_{j\neq Y}\E_{h\sim \rho}[\mathbb{1}(h(X) = j)]$
is called the \emph{margin}. Unfortunately, the use of the margin function makes direct analytical comparison to our bound difficult. However, the bounds can be compared empirically, where the relevant quantities are estimated on hold-out data. In Figure \ref{fig:bound-comparison}, we compare the value of our bound against the value of the multi-class C-bound, on tasks for which we have verified the competence assumption holds. We find that in all but one case (random forests with MNIST), our bound is superior empirically, sometimes significantly. Interestingly, we observe that our bound does particularly well on tasks with only a few classes. This behavior might be attributed to the constant $\frac{4(K-1)}{K}$ in the upper bound \eqref{eq:our-second-order} which increases as the number of classes $K$ grows.


\begin{figure}
    \centering
    \includegraphics[scale=0.4]{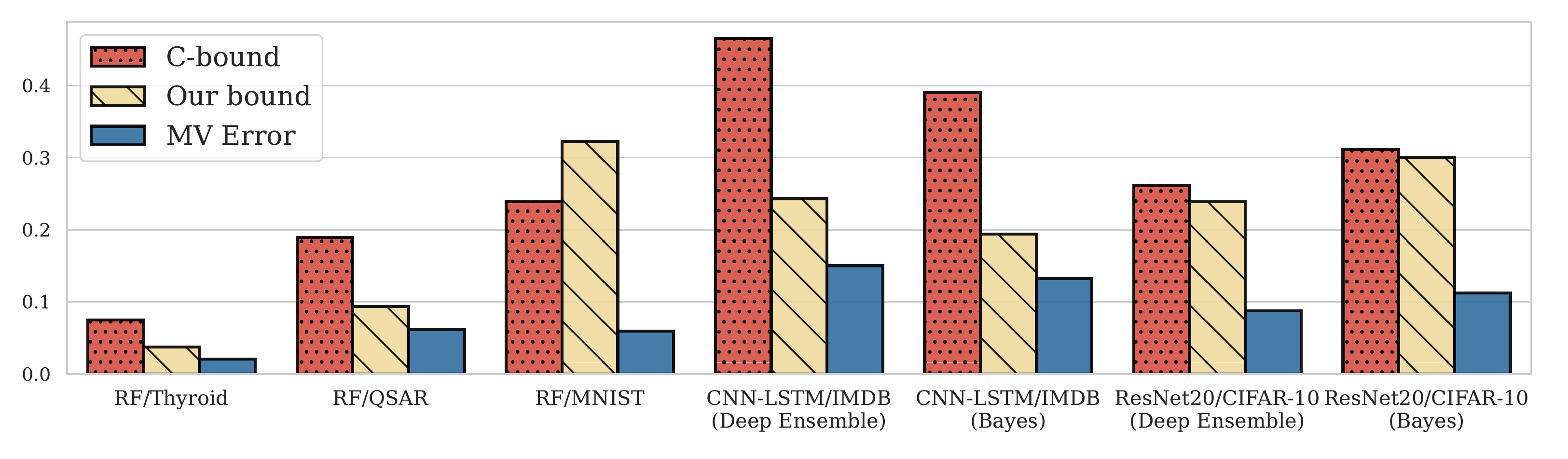}
    \caption{\textbf{Our bound \eqref{eq:our-second-order} versus the multi-class C-bound \eqref{eq:app-c-bound}.}}
    \label{fig:bound-comparison}
\end{figure}

\section{Additional empirical results}
\label{app:additional-empirical}

\subsection{Experimental details}
\label{app:emp-details}
\paragraph{Bagged random feature classifiers.} 
We consider ensembles of random ReLU feature classifiers, constructed as follows. 
For each classifier, we draw a random matrix $\boldsymbol{U} \in \R^{N\times d}$, whose rows $\boldsymbol{u}_j$ are drawn from the uniform distribution on the sphere $\mathbb{S}^{d-1}$. 
For a given input $\xb \in \R^d$, we compute the feature $\zb(\xb) = \sigma(\boldsymbol{Ux})$ where $\sigma(t) = \max(t,0)$ is the ReLU function. 
We then fit a multi-class logistic regression model in \texttt{scikit-learn} \cite{scikit-learn} using these (random) features. 
To form an ensemble of these classifiers, we additionally perform bagging, by sampling a different set of size $n$ with replacement from the training set of size $n$, independently for each individual classifier. 
Thus, each classifier is subject to two different types of randomness: the randomness from the sampling of the feature matrix $\boldsymbol{U}$; and the randomness from the bootstrapping of the training data. For the models shown in the competence plot in Figure \ref{fig:competence}, we use $500$ random features and $M=100$ classifiers.\vspace{-0.03cm}

\paragraph{Random forests.} 
We consider random forest (RF) models as implemented in \texttt{scikit-learn} \cite{scikit-learn}, each made up of 20 individual decision trees. 
We vary the maximum number of leaf nodes in each tree to construct models with varying performance.
For the single-ensemble results presented, we use the default parameters implemented in \texttt{scikit-learn}. 
For the random forests, we use a small version of the MNIST dataset with 5000 randomly selected training examples (500 from each of the 10 classes). 
We also use two binary classification datasets retrieved from the UCI repository \cite{Dua:2019}: the QSAR oral toxicity dataset (7.2k train, 1.8k test examples, 1024 features) \cite{qsar-dataset}; and the Thyroid disease dataset (2.5k train, 633 test examples, 21 features) \cite{thyroid-dataset}. For the models shown in the competence plot in Figure \ref{fig:competence}, we use the default settings of the random forest implementation in \texttt{scikit-learn}.

\vspace{-0.03cm}
\paragraph{Deep ensembles.} 
We consider four different architectures for our deep ensembles. 
We use a standard ResNet18 models \cite{resnet} trained on the CIFAR-10 dataset \cite{cifar10}, using 100 epochs of SGD with momentum $0.9$, weight decay of $5 \times 10^{-4}$ and a learning rate of $0.1$, while varying the batch size and width hyper-parameters. 
We report results from two variants of this empirical evaluation: one in which we employ learning rate decay (by dropping the learning rate to $0.01$ after 75 epochs); and another in which we disable learning rate decay. 
For each setting, we train $5$ models from independent initialization to form the respective ensembles. 
We also evaluate these models on two out-of-distribution databases: CIFAR-10.1 and CIFAR-10-C \cite{recht2018cifar10.1, hendrycks2019robustness} (the latter is itself comprised of 19 different datasets employing various types of data corruption). 
Finally, we evaluate deep ensembles of 25 standard BERT models \cite{devlin-etal-2019-bert}, provided with the paper \cite{multibert}, fine-tuned on the GLUE classification tasks \cite{wang2018glue}.\vspace{-0.1cm}

\paragraph{Bayesian ensembles.} 
For the Bayesian ensembles used in this paper, we consider samples provided in \cite{pmlr-v139-izmailov21a}, obtained via large-scale sampling from a Bayesian posterior using Hamiltonian Monte Carlo.
To our knowledge, these samples are the most precisely representative of a theoretical Bayesian neural network posterior publicly available.
In particular, we use samples on the CIFAR-10 datasets with a ResNet20 
architecture, and the IMDB dataset on the CNN-LSTM architecture. 
We defer to the original paper~\cite{pmlr-v139-izmailov21a} for additional~details. 

\subsection{More competence plots}
In this section, we provide additional empirical results.
To further verify that the competence assumption holds broadly in practice, here we include several more examples of competence plots for experiments presented in the main text. \vspace{-0.1cm}

\paragraph{ResNet18 on CIFAR-10 OOD variants.} In Figures \ref{fig:competence-cifar-variants-lr-decay} and \ref{fig:competence-cifar-variants-no-lr-decay}, we plot competence plots for the ResNet18 ensembles on the CIFAR-10, CIFAR-10.1 and a subset of the CIFAR-10-C datasets \cite{recht2018cifar10.1, hendrycks2019robustness}. We find that the competence assumption holds across all examples. \vspace{-0.1cm}

\paragraph{Fine-tuned BERT models.}
In Figure \ref{fig:competence-glue}, we provide competence plots for the BERT/GLUE fine-tuning tasks. For the RTE, CoLA, MNLI, QQP and QNLI tasks, we find that the competence assumption holds. However, we find two examples here where it does not: the MRPC and SST-2 tasks, although the extent to which the assumption is violated in minor. Since these are particularly small datasets, this may also be a product of noise from low sample size.
\vspace{-0.1cm}

\begin{figure}[h!]
    \centering
    \includegraphics[scale=0.28]{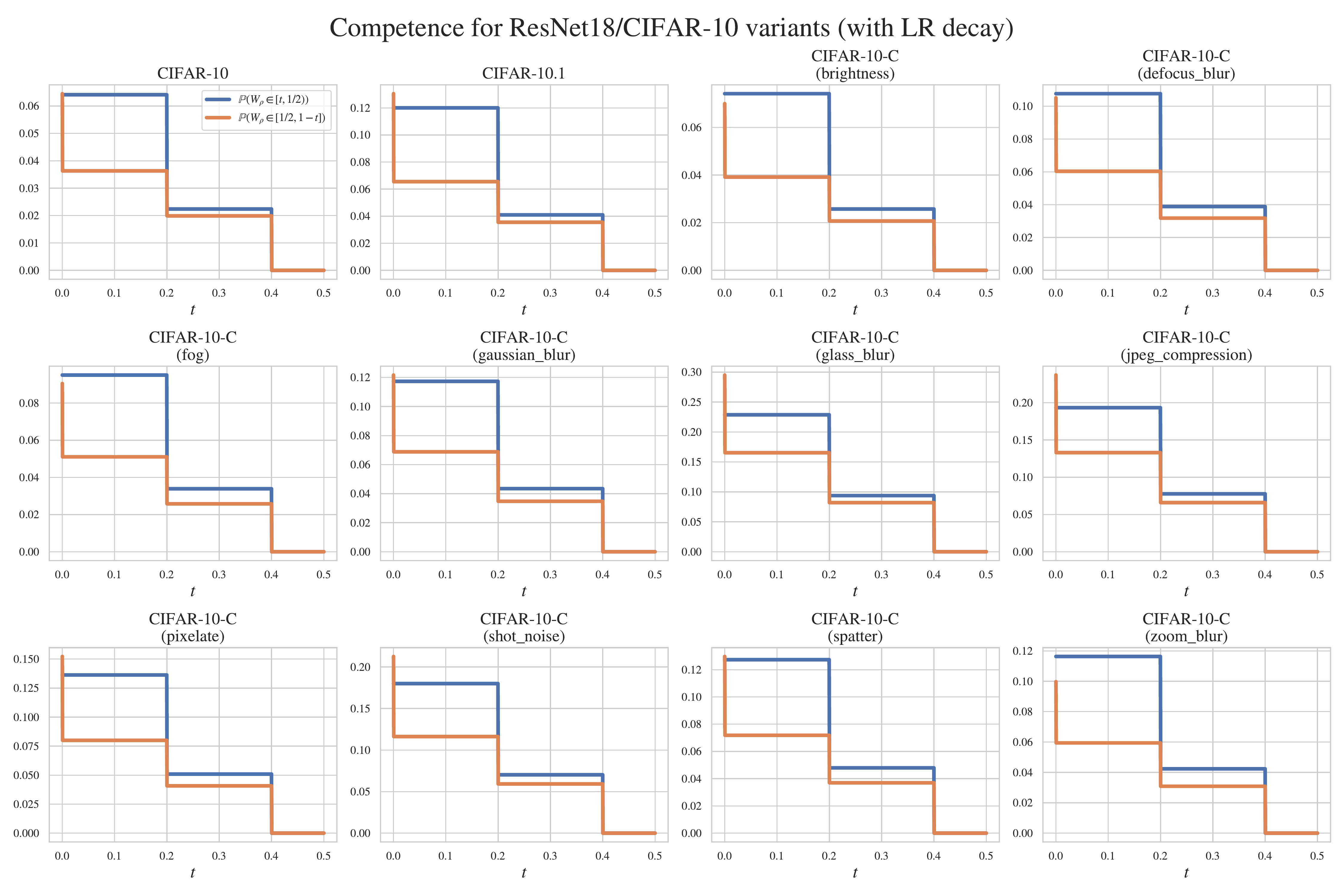}
    \caption{\textbf{Competence for ResNet18/CIFAR-10 variants (models with learning rate decay).} We observe that the competence assumption holds across all tasks.}
    \vspace{-0.4cm}
    \label{fig:competence-cifar-variants-lr-decay}
\end{figure}

\begin{figure}[h!]
    \centering
    \includegraphics[scale=0.28]{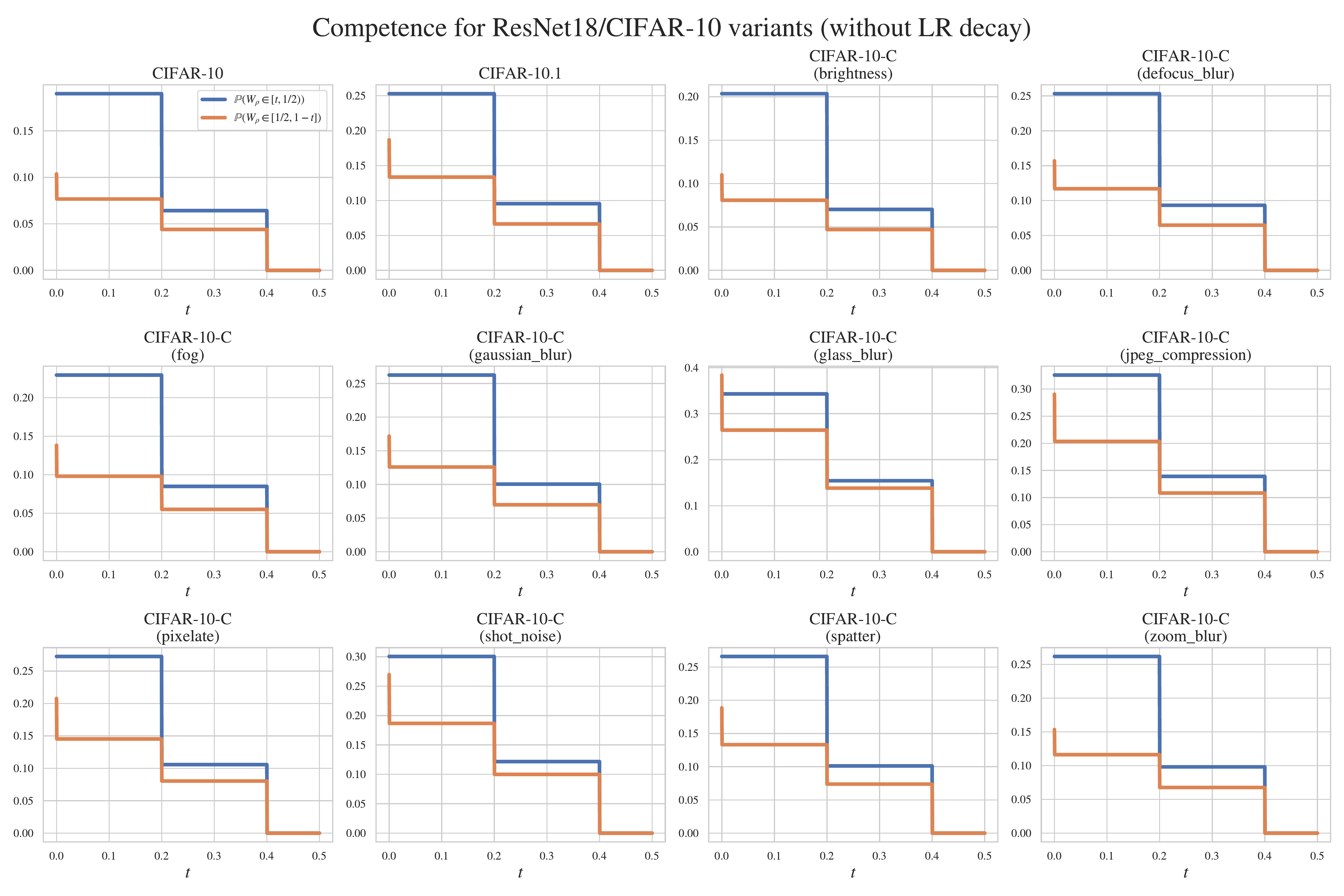}
    \caption{\textbf{Competence for ResNet18/CIFAR-10 variants (models without learning rate decay).} We observe that the competence assumption holds across all tasks.}
    \vspace{-0.4cm}
    \label{fig:competence-cifar-variants-no-lr-decay}
\end{figure}

\begin{figure}[h!]
    \centering
    \includegraphics[scale=0.3]{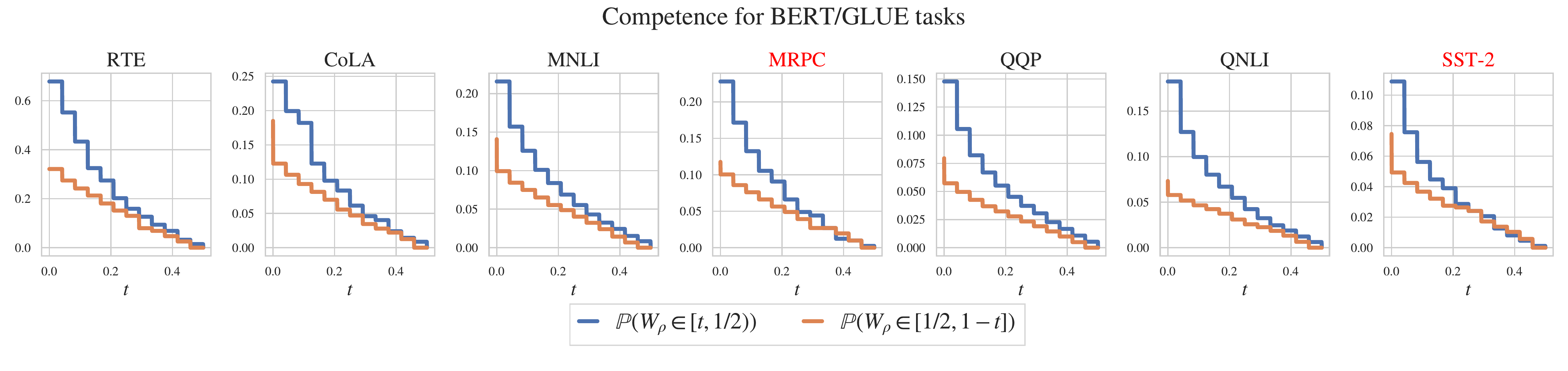}
    \caption{\textbf{Competence for BERT/GLUE fine-tuning tasks.} The competence assumption holds for the RTE, CoLA, MNLI, QQP and QNLI tasks, though interestingly, we find that the competence assumption is (to a small degree) violated for two of the tasks: MRPC and SST-2.}
    \label{fig:competence-glue}
\end{figure}

\section{Pathological ensembles satisfying $L(h_\mv) = 2\E[L(h)]$}\label{app:pathological}

In this section, we provide two pathological examples of ensembles that makes the ``first-order'' upper bound tight. 
In particular, the second example shows that positive margin condition, i.e., $\E[M_\rho(X,Y)] > 0$ where $M_\rho(X,Y) = \E_{h\sim \rho}[ \mathbb{1}(h(X) = Y)] - \max_{j\neq Y}\E_{h\sim \rho}[\mathbb{1}(h(X) = j)]$, from existing literature is not enough to rule out pathological cases. 
Recall that the first-order bound introduced in Section \ref{sec:existing-theory} is the following:
\begin{align*}
0 \leq \testerr(h_\mv) \leq \mathbb{P}(W_\rho \geq 1/2) \leq 2\, \mathbb{E}[W_\rho] = 2\,\E_{h\sim \Qc}[\testerr(h)].
\end{align*}

\begin{figure}[h!]
\centering
  \begin{subfigure}{0.48\textwidth}
    \centering
    \includegraphics[scale = 0.41]{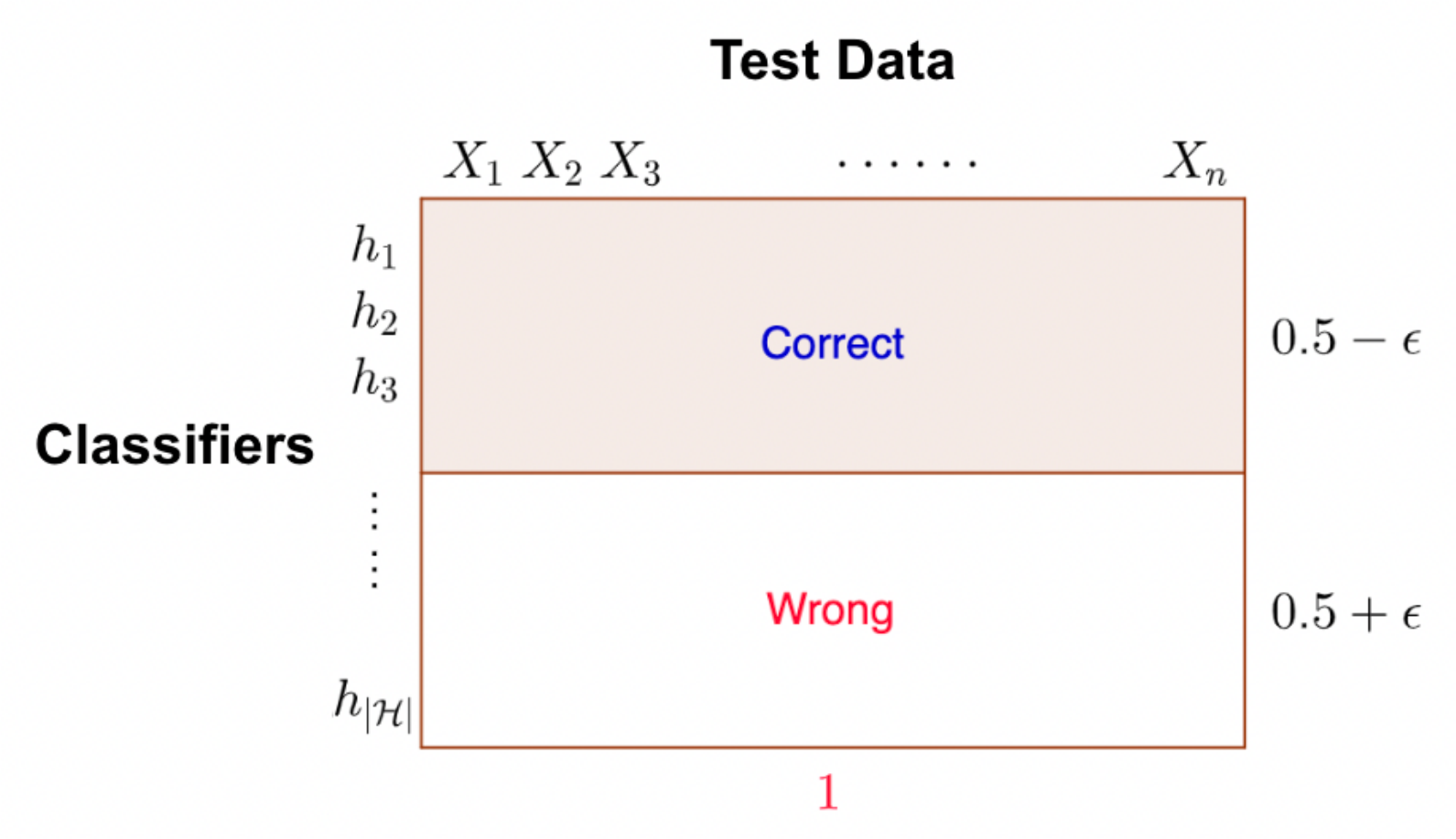}
    \caption{Composition of classifiers in Example \ref{ex:example-1}} \label{fig:firsttight1}
  \end{subfigure}%
  \hfill   
  \begin{subfigure}{0.48\textwidth}
    \centering
    \includegraphics[scale = 0.41]{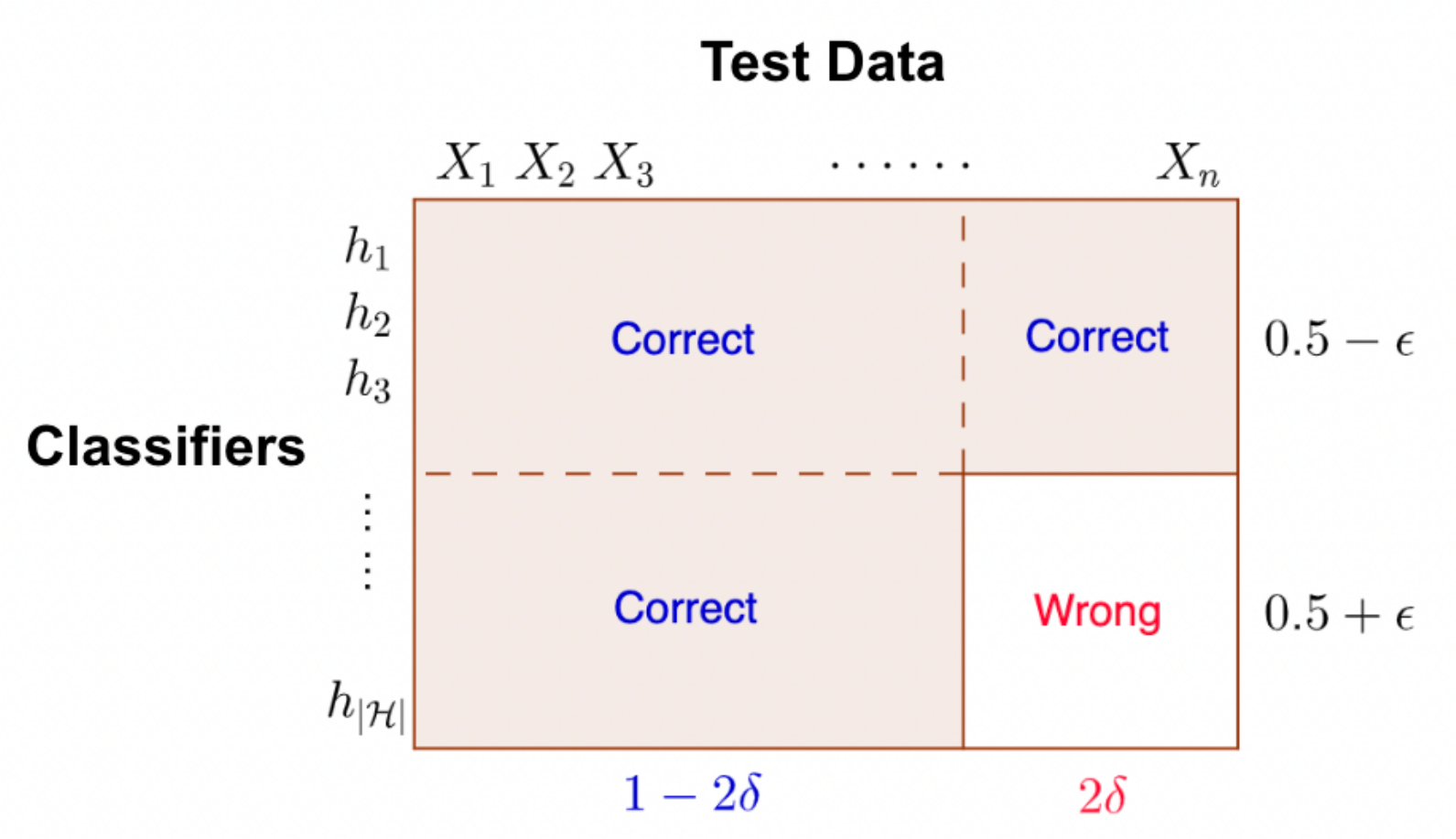}
    \caption{Composition of classifiers in Example \ref{ex:example-2}} \label{fig:firsttight2}
  \end{subfigure}%
\caption{\textbf{Illustration of the composition of classifiers given in Examples~\ref{ex:example-1} and~\ref{ex:example-2}.} On each plot, the area of the white box equals to the average test error rate. On Figure \ref{fig:firsttight1}, the majority vote error rate is $1$, while the average test error rate is $0.5 + \epsilon$. On Figure \ref{fig:firsttight2}, the majority vote error rate is $2\delta$ (Rightmost $2\delta$ test data) while the average test error rate is $\delta(1 + 2\epsilon)$. The margin of each composition of classifiers is $2\epsilon \to 0$ and $1-2\delta(1+2\epsilon) > 0$, respectively.} \label{fig:firsttight}
\end{figure}

\begin{example}[The first-order upper bound is tight]
\label{ex:example-1}
Consider a classification problem with two classes. 
For given $\epsilon > 0$, suppose slightly less than half, $0.5-\epsilon$, fraction of classifiers are the perfect classifier, correctly classifying test data with probability 1, and the other $0.5+\epsilon$ fraction of classifiers are completely wrong, incorrectly predicting on test data with probability 1. 
With this composition of classifiers, the average error rate is $0.5+\epsilon$ and the majority vote error rate is $1$. 
Taking $\epsilon \to 0$ concludes that the first-order upper bound \eqref{eq:trivial-FO} is tight. 
A visual illustration of the composition of classifiers is given in Figure \ref{fig:firsttight1}.
\end{example}

The condition $\E[M_\rho(X,Y)] > 0$ rules out the ensemble described in Example \ref{ex:example-1}. 
Nonetheless, the first-order bound $2\E[L(h)]$ is tight \emph{even when} $\E[M_\rho(X,Y)] > 0$ is satisfied, as we show with the following example.

\begin{example}[The first-order upper bound is tight even when the margin is large]
\label{ex:example-2}
We again consider a classification problem with two classes. 
For given $\epsilon > 0$, as in Example \ref{ex:example-1}, slightly less than half, $0.5-\epsilon$, fraction of classifiers are the perfect classifier. 
All of the other $0.5+\epsilon$ fraction of classifiers, on the contrary, now correctly predict on the same $1-2\delta$ fraction of the test data and incorrectly predict on the other $2\delta$ fraction of the test data.
With this composition of classifiers, the majority vote error rate is $2\delta$ even when the average error rate is $\delta(1+2\epsilon)$. 
In addition, unlike the composition of classifiers in Example \ref{ex:example-1}, the margin of which is $2\epsilon$, the margin of the new composition of classifiers is 
$1-2\delta(1+2\epsilon)$, which can be any value smaller than $1$. 
Taking $\epsilon \to 0$ concludes that the first-order upper bound \eqref{eq:trivial-FO} is also tight when the margin is arbitrarily high. 
A visual illustration of the composition of classifiers is given in Figure \ref{fig:firsttight2}.
\end{example}

\end{document}